\documentclass{article}

\usepackage[preprint]{neurips_2026}

\usepackage[utf8]{inputenc}
\usepackage[T1]{fontenc}
\usepackage{hyperref}
\hypersetup{colorlinks=false, pdfborder={0 0 0}, hypertexnames=false,
  pdftitle={TreeWalker: Partial Evaluation for Grouped Tree-Ensemble Inference},
  pdfauthor={Durmus Karatay, Richard Newman}}
\usepackage{url}
\newcommand{\codeurl}{\url{https://github.com/Shopify/treewalker/tree/neurips2026-final}}
\usepackage{booktabs}
\usepackage{amsfonts}
\usepackage{amsmath,amssymb}
\usepackage{amsthm}
\theoremstyle{plain}
\newtheorem{theorem}{Theorem}
\newtheorem{lemma}{Lemma}
\newtheorem{corollary}{Corollary}
\theoremstyle{definition}
\newtheorem{definition}{Definition}
\usepackage{nicefrac}
\usepackage{graphicx}
\usepackage{xcolor}
\usepackage{algorithm}
\usepackage{algpseudocode}
\usepackage{tabularx}
\usepackage{natbib}
\usepackage{float} %
\usepackage{tikz}
\usetikzlibrary{arrows.meta}

\definecolor{twnavy}{HTML}{2D4254}
\definecolor{twcoral}{HTML}{E87A5D}
\definecolor{twgreen}{HTML}{8DB584}

\graphicspath{{figures/}}

\title{TreeWalker: Partial Evaluation\\for Grouped Tree-Ensemble Inference}

\author{%
  Durmus Karatay \\
  Shopify \\
  \texttt{durmus.karatay@shopify.com} \\
  \And
  Richard Newman \\
  Shopify \\
  \texttt{richard.newman@shopify.com} \\
}
\date{}

\begin{document}

\maketitle

\begin{abstract}
Many prediction tasks evaluate a trained tree ensemble on groups of
rows that share feature values: discrete-time survival models expand
each patient into $G$ time steps, click-through-rate models score every
item in a search session, and scenario analyses vary a few inputs while
holding the rest fixed. Standard inference treats each row
independently and ignores this input-side correlation.

We present TreeWalker, which instantiates partial evaluation for
grouped tree-ensemble inference: constant features are static, varying
features dynamic. TreeWalker walks each tree once per group, partitions
a row bitmask at varying splits, and skips empty subtrees via an
unsplit shortcut. It reads standard LightGBM and XGBoost models;
training is unchanged.

We prove a structural work decomposition: per-tree work splits into
the constant-projected subtree size $|T_c|$, $G$ leaf writes, and a
predicate-mask provisioning cost $Q$. For the trace evaluator, per-row
work approaches $(d_v+1)/(d+1)$ as $G \to \infty$, giving asymptotic
speedup $1/(1-\bar\rho_G)$.

On Intel, TreeWalker is ${\sim}3\times$ faster than a row-independent
traversal at the reference model size ($T{=}500$, $L{=}8$; $G{=}16$
for SUPPORT/FLCHAIN and empirical groups for Expedia) and
$7.7$--$9.1\times$ faster at $G{=}128$ on the survival datasets; on
Arm, $2.5$--$2.9\times$ and $5.5$--$6.0\times$. On a scenario-analysis
benchmark it is faster in all $16$ configurations on both machines.
For f64 models, outputs match treelite GTIL up to tree-ordering
roundoff; for f32 models, TreeWalker's f64 leaf accumulation is closer
than native f32 to an exactly rounded reference on every audited row.
\end{abstract}
\section{Introduction}\label{sec:introduction}

Gradient-boosted decision tree (GBDT) ensembles dominate tabular
prediction~\citep{chen2016xgboost}. In many applications the ensemble
scores a \emph{group} of rows that share most feature values, not a
single row (\S\ref{sec:grouped}).

\paragraph{What's missing.}
Fast tree-inference systems optimize either per-visit cost
(compilation~\citep{boehm2021lleaves, nakandala2020mlprediction},
vectorization~\citep{nvidia2024fil}, data
layout~\citep{asadi2014runtime}) or per-walk cost (early termination
via leaf-bitvector intersection~\citep{lucchese2015quickscorer}). None of these exploit
\emph{input-side} correlation across the rows of a group, even though $48$--$70\%$ of features are constant within a group in our three factorial-grid datasets. Each row pays the full cost of its own walk, repeating the
constant-feature prefix $G$ times.

\paragraph{Partial evaluation across rows.}
We instantiate online partial evaluation~\citep{jones1993partial,
futamura1999partial, wuerthinger2017partial} for grouped tree
inference: constant features are the static inputs, varying features
the dynamic, and the recursion the binding-time fold.
TreeWalker\footnote{Code and benchmark harness, at the version measured here: \codeurl.} walks each
tree once per group, maintaining a row bitmask
(\texttt{u16}/\texttt{u32}/\texttt{u64}, or an array of words for
wider groups). The constant prefix advances in $O(1)$ regardless of $G$. At
each varying split the algorithm partitions the bitmask, prunes empty
sides via an unsplit shortcut, and continues. To our knowledge no prior
work exploits constant-feature partitioning across batched rows for
tree inference; the closest related work (the QuickScorer family) optimizes
per-walk cost via leaf-bitvector intersection within each tree,
orthogonal to our axis.

\paragraph{Operational contract.}
TreeWalker targets workloads whose request semantics supply grouped
rows; it does not construct groups from unstructured batches, and it
offers no advantage when rows share no feature values. In the target
workloads the task itself creates the group: discrete-time expansion
materializes $G$ rows from one entity, ranking sessions arrive grouped,
and scenario analysis generates the variants. TreeWalker is therefore
not a drop-in replacement for unstructured batch inference.

\paragraph{Contributions.}
\begin{enumerate}
\item A grouped-input partial evaluation algorithm with constant-walk
and row-mask propagation (\S\ref{sec:algorithm}).
\item A structural work decomposition theorem with two corollaries: a
trace-evaluator asymptote of $1/(1-\bar\rho_G)$ and a forest-level
provisioning bound for the default implementation
(\S\ref{sec:analysis}).
\item Empirical validation of the decomposition's predicted asymptote,
a factorial-grid speedup study on three datasets against LightGBM,
XGBoost, tl2cgen, lleaves, and QuickScorer, and a scenario-analysis
benchmark on a fourth (\S\ref{sec:evaluation}).
\end{enumerate}
\section{Background}\label{sec:background}

\subsection{Grouped Prediction Tasks}\label{sec:grouped}

A \emph{grouped prediction task} evaluates a model on sets of rows that
share a common subset of features. Each observation $i$ produces a
group of $G_i$ input vectors $\{(x_c^{(i)}, x_v^{(i,g)})\}_{g=1}^{G_i}$,
where $x_c$ are \emph{constant features} (identical across rows) and
$x_v$ are \emph{varying features} (potentially distinct per row). Three concrete
instances motivate our work: discrete-time survival
models~\citep{singer1993time, lee2018deephit} expand each patient into
$G$ rows over a time horizon (covariates constant, time features
varying); session-level CTR models~\citep{expedia2013kaggle} score
each item in a search session (user/query constant, item attributes
varying); scenario analyses evaluate a row under multiple parameter
perturbations. The \emph{constant fraction} $|x_c| / (|x_c| + |x_v|)$ proxies
shareable work: $48\%$ (Expedia) to $70\%$ (SUPPORT) across our factorial-grid benchmarks. The realized work-share, captured by $(d_v+1)/L$ in
Table~\ref{tab:decomp-results}, depends on the model's split-feature distribution.

\paragraph{Grouped data in modeling.}
Multilevel and nested-data models~\citep{gelman2006multilevel}, tree
models with random effects for clustered and longitudinal
data~\citep{sela2012reem, hajjem2014merf, sigrist2022gpboost}, and GBDT
forecasters for large collections of time
series~\citep{makridakis2022m5} specify models \emph{for} grouped data.
TreeWalker is orthogonal to that choice: it accelerates exact
inference for tree ensembles on grouped rows, however they were
trained. These settings are sources of grouped workloads, not competing
methods.

\paragraph{Notation.}
$T$ denotes the number of trees, $L$ the maximum tree depth, $G$ the
group width (number of rows per observation), $d$ the average root-to-leaf depth, $f$ a feature index, and $r \in \{0,\dots,G-1\}$ the row index within a group.

\subsection{GBDT Inference}\label{sec:gbdt}

A gradient-boosted tree ensemble $F(x) = \sum_{t=1}^{T} h_t(x)$, where
$h_t$ is the $t$-th tree, predicts by traversing each tree from root to
leaf and summing the leaf outputs. Per-row cost is worst-case $O(TL)$,
with average cost tracking $T \cdot d$ for average root-to-leaf depth
$d \le L$; evaluating rows independently multiplies both by $G$ per
group.

\paragraph{Axes of fast tree inference.}
Three cost axes characterize fast tree inference; prior work occupies
the first two.

\emph{Per-visit cost.} Compilation lowers interpretation overhead at
each internal node: lleaves~\citep{boehm2021lleaves} compiles trees to
LLVM IR, tl2cgen~\citep{tl2cgen2024} generates C from
treelite~\citep{cho2018treelite} models, and
Hummingbird~\citep{nakandala2020mlprediction} reformulates trees as
tensor programs; Yggdrasil-DF~\citep{guillame2022tfdf} uses SIMD-aware
ahead-of-time compilation. Vectorization fuses per-visit work across
observations: RAPIDS~FIL~\citep{nvidia2024fil} and ONNX~Runtime's
\texttt{TreeEnsemble}~\citep{onnxruntime_tree} are representative.
Cache-aware data layouts~\citep{asadi2014runtime} reduce per-visit
memory cost.

\emph{Per-walk cost.} Early-termination methods amortize predicate
evaluation across the ensemble:
QuickScorer~\citep{lucchese2015quickscorer},
V-QuickScorer~\citep{lucchese2017vquickscorer}, and
RapidScorer~\citep{ye2018rapidscorer} use per-tree leaf-bitvectors,
intersected across trees, to short-circuit traversal. The only open-source implementation~\citep{quickscorer_rs}
limits trees to $\le 128$ leaves and has neither categorical splits
nor missing-value routing; it does not cover our $L \ge 8$ cells
or exploit grouped inputs (comparison on an encoded variant in
Appendix~\ref{sec:app_quickscorer}).

\emph{Number-of-walks (this work).} Partial evaluation against shared
constants reduces the count of tree walks from $GT$ toward $T$ by
fusing constant-feature prefixes across the rows of a group. The axis
is orthogonal to per-visit and per-walk optimizations and composes
with them (\S\ref{sec:discussion}).
\section{Algorithm}\label{sec:algorithm}

\subsection{Recursive Partial Evaluation}\label{sec:partialeval}

Rows sharing constant features follow the same path through the tree
until they encounter a varying split. TreeWalker exploits this by
walking each tree once per group, not once per row. The walk
maintains two pieces of state: a node index (position in the tree)
and a row bitmask (which of the $G$ rows are still active).
Algorithm~\ref{alg:partial_eval} states the conceptual recursion;
Figure~\ref{fig:worked_example} illustrates the idea on a toy tree
with a group of four rows.

\begin{figure}[!htb]
\centering
\begin{tikzpicture}[
  y=0.8cm,
  font=\footnotesize,
  >={Stealth[length=4pt,width=3.5pt]},
  internal/.style={rectangle, rounded corners=2pt, fill=twnavy, text=white,
                   inner sep=3pt, minimum width=15mm, minimum height=5.5mm, align=center},
  vary/.style={rectangle, rounded corners=2pt, fill=twcoral, text=white,
               inner sep=3pt, minimum width=15mm, minimum height=5.5mm, align=center},
  leafbox/.style={rectangle, rounded corners=2pt, fill=twgreen, text=white,
                  font=\footnotesize\bfseries, inner sep=2pt,
                  minimum width=7mm, minimum height=5.5mm, align=center},
  tedge/.style={draw=gray!55, line width=0.8pt},
  costN/.style={text=twnavy, font=\footnotesize\bfseries},
  costC/.style={text=twcoral, font=\footnotesize\bfseries},
  annotN/.style={text=twnavy, font=\scriptsize\itshape, align=center},
  annotC/.style={text=twcoral, font=\scriptsize\itshape, align=center},
  mon/.style={rectangle, fill=twnavy, text=white, inner sep=0pt,
              minimum width=3mm, minimum height=4mm, font=\tiny\sffamily\bfseries},
  moff/.style={rectangle, fill=gray!12, text=gray!55, inner sep=0pt,
               minimum width=3mm, minimum height=4mm, font=\tiny\sffamily\bfseries},
  ptitle/.style={font=\bfseries},
  bottomcap/.style={font=\footnotesize, align=center},
  hdr/.style={inner sep=1.2pt, minimum width=5.5mm, minimum height=3.0mm, font=\scriptsize},
  navcell/.style={inner sep=1.2pt, minimum width=5.5mm, minimum height=3.0mm,
                  font=\scriptsize, fill=twnavy!15, draw=white, line width=0.4pt},
  varcell/.style={inner sep=1.2pt, minimum width=5.5mm, minimum height=3.0mm,
                  font=\scriptsize, fill=twcoral!22, draw=white, line width=0.4pt},
]

\begin{scope}[local bounding box=panelA]
  \node[ptitle] at (-0.2, 5.0) {(a)~Standard inference};

  \node[hdr]     at (-0.45, 4.35) {$r_0$};
  \node[hdr]     at ( 0.10, 4.35) {$r_1$};
  \node[hdr]     at ( 0.65, 4.35) {$r_2$};
  \node[hdr]     at ( 1.20, 4.35) {$r_3$};
  \node[hdr]     at (-1.00, 3.97) {$x_1$};
  \node[navcell] at (-0.45, 3.97) {2};
  \node[navcell] at ( 0.10, 3.97) {2};
  \node[navcell] at ( 0.65, 3.97) {2};
  \node[navcell] at ( 1.20, 3.97) {2};
  \node[hdr]     at (-1.00, 3.59) {$x_5$};
  \node[varcell] at (-0.45, 3.59) {2};
  \node[varcell] at ( 0.10, 3.59) {4};
  \node[varcell] at ( 0.65, 3.59) {7};
  \node[varcell] at ( 1.20, 3.59) {9};
  \node[text=twnavy,  font=\tiny\itshape, anchor=west] at (1.52, 3.97) {constant};
  \node[text=twcoral, font=\tiny\itshape, anchor=west] at (1.52, 3.59) {varying};

  \node[internal] (rA)  at ( 0.0,  2.35) {$x_1 \le 3$};
  \node[vary]     (lA)  at (-1.4,  0.85) {$x_5 \le 5$};
  \node[leafbox]  (rlA) at ( 1.4,  0.85) {0.3};
  \node[leafbox]  (llA) at (-2.1, -1.00) {0.1};
  \node[leafbox]  (lrA) at (-0.7, -1.00) {0.2};

  \draw[tedge] (rA) -- (lA);
  \draw[tedge] (rA) -- (rlA);
  \draw[tedge] (lA) -- (llA);
  \draw[tedge] (lA) -- (lrA);

  \node[costN] at (-1.00,  1.65) {$\times 4$};
  \node[costC] at (-1.95,  0.00) {$\times 2$};
  \node[costC] at (-0.85,  0.00) {$\times 2$};

  \node[bottomcap] at (-0.2, -1.70) {$x_1$ evaluated $4\times$,\quad $x_5$ evaluated $4\times$};
\end{scope}

\begin{scope}[xshift=7.7cm, local bounding box=panelB]
  \node[ptitle] at (-0.2, 5.0) {(b)~TreeWalker};

  \node[hdr]     at (-0.45, 4.35) {$r_0$};
  \node[hdr]     at ( 0.10, 4.35) {$r_1$};
  \node[hdr]     at ( 0.65, 4.35) {$r_2$};
  \node[hdr]     at ( 1.20, 4.35) {$r_3$};
  \node[hdr]     at (-1.00, 3.97) {$x_1$};
  \node[navcell] at (-0.45, 3.97) {2};
  \node[navcell] at ( 0.10, 3.97) {2};
  \node[navcell] at ( 0.65, 3.97) {2};
  \node[navcell] at ( 1.20, 3.97) {2};
  \node[hdr]     at (-1.00, 3.59) {$x_5$};
  \node[varcell] at (-0.45, 3.59) {2};
  \node[varcell] at ( 0.10, 3.59) {4};
  \node[varcell] at ( 0.65, 3.59) {7};
  \node[varcell] at ( 1.20, 3.59) {9};
  \node[text=twnavy,  font=\tiny\itshape, anchor=west] at (1.52, 3.97) {constant};
  \node[text=twcoral, font=\tiny\itshape, anchor=west] at (1.52, 3.59) {varying};

  \node[internal] (rB)  at ( 0.0,  2.35) {$x_1 \le 3$};
  \node[vary]     (lB)  at (-1.4,  0.85) {$x_5 \le 5$};
  \node[leafbox]  (rlB) at ( 1.4,  0.85) {0.3};
  \node[leafbox]  (llB) at (-2.3, -1.00) {0.1};
  \node[leafbox]  (lrB) at (-0.5, -1.00) {0.2};

  \draw[tedge] (rB) -- (rlB);

  \node[font=\tiny, text=gray!70] at ( 2.35,  2.85) {$r_0$\ \ $r_1$\ \ $r_2$\ \ $r_3$};
  \node[mon] at ( 1.825, 2.42) {1};
  \node[mon] at ( 2.175, 2.42) {1};
  \node[mon] at ( 2.525, 2.42) {1};
  \node[mon] at ( 2.875, 2.42) {1};

  \draw[->, twnavy, line width=0.9pt]
    (rB.south) .. controls (-0.30, 1.75) and (-1.40, 1.55) .. (lB.north);
  \node[annotN] at (-1.80, 1.50) {constant\\walk};

  \node[mon]  at (-2.825, -0.25) {1};
  \node[mon]  at (-2.475, -0.25) {1};
  \node[moff] at (-2.125, -0.25) {0};
  \node[moff] at (-1.775, -0.25) {0};
  \node[moff] at (-1.025, -0.25) {0};
  \node[moff] at (-0.675, -0.25) {0};
  \node[mon]  at (-0.325, -0.25) {1};
  \node[mon]  at ( 0.025, -0.25) {1};

  \draw[->, twcoral, line width=1pt]
    (lB.south) .. controls (-1.90, 0.40) and (-2.30, 0.20) .. (-2.30, 0.00);
  \draw[->, twcoral, line width=1pt]
    (lB.south) .. controls (-0.90, 0.40) and (-0.50, 0.20) .. (-0.50, 0.00);
  \node[annotC, anchor=west] at (0.90, -0.15) {varying\\partition};

  \draw[tedge] (-2.30, -0.45) -- (llB.north);
  \draw[tedge] (-0.50, -0.45) -- (lrB.north);

  \node[bottomcap] at (-0.2, -1.70) {$x_1$ evaluated $1\times$,\quad $x_5$ partitioned $1\times$};
\end{scope}
\end{tikzpicture}
\caption{Worked example: a 4-row group with shared $x_1{=}2$
and varying $x_5{=}[2,4,7,9]$.
(a)~Standard inference evaluates $x_1 \le 3$ four times and
$x_5 \le 5$ four times.
(b)~TreeWalker evaluates the constant split once, carries the
active-row mask $[1111]$, and partitions it at the varying split
into $[1100]$ and $[0011]$ before leaf accumulation.}\label{fig:worked_example}
\end{figure}

\paragraph{Constant walk.}
Starting from the root, TreeWalker follows the path determined by the
constant features. At each node, if the split feature is constant
across the group, the split outcome is identical for all rows and the
algorithm advances to the appropriate child in $O(1)$. The walk
continues until it reaches either a leaf or a varying split.

\paragraph{Varying split.}
When the walk encounters a node whose split feature varies across the
group, TreeWalker partitions the active row bitmask: for each active
row it evaluates the split condition and assigns the row to the left or
right child's bitmask. The walk then recurses on both children, each
carrying its subset of active rows.

\paragraph{Leaf accumulation.}
At a leaf, TreeWalker adds the leaf value to the output for every
active row in the bitmask.

\begin{algorithm}[!htb]
\caption{Recursive partial evaluation of one tree for a group of $G$
rows. Analyzed in \S\ref{sec:analysis}; the engineered variant
(Algorithm~\ref{alg:partial_eval_impl},
Appendix~\ref{sec:app_implementation}) applies the constant-factor
optimizations of \S\ref{sec:tuning}.}\label{alg:partial_eval}
\begin{algorithmic}[1]
\Procedure{PartialEval}{tree, node, mask} \Comment{\texttt{mask} is a \texttt{u16}/\texttt{u32}/\texttt{u64} or word array}
\While{node is not a leaf}
    \If{node.feature is constant for the group}
        \State node $\gets$ child matching constant value \Comment{constant walk}
    \Else
        \State $(m_L, m_R) \gets \Call{Partition}{\text{node, mask}}$ \Comment{varying split}
        \State \Call{PartialEval}{tree, node.left,  $m_L$}
        \State \Call{PartialEval}{tree, node.right, $m_R$}
        \State \Return
    \EndIf
\EndWhile
\ForAll{bit $r$ set in mask} \Comment{leaf accumulation}
    \State $\text{results}[r] \gets \text{results}[r] + \text{node.value}$
\EndFor
\EndProcedure
\end{algorithmic}
\end{algorithm}

\subsection{Implementation Tuning}\label{sec:tuning}

The conceptual algorithm admits four constant-factor optimizations
that the implementation applies and the analysis
(\S\ref{sec:analysis}) references: an \emph{unsplit shortcut} skips
recursion into empty-mask children (Lemma~\ref{lem:unsplit});
\emph{heavy-path DFS} reorders nodes for cache locality, with
light-side recursion bounding stack depth (Lemma~\ref{lem:stack});
\emph{monotonic early-termination} stops partitioning when the split
feature is monotone along the row sequence (trace evaluator only); and \emph{predicate
precomputation} (sort + sweep per varying feature, bitmask lookup at
each split) instantiates Corollary~\ref{cor:impl}. The engineered
variant (Algorithm~\ref{alg:partial_eval_impl}) and data layout
(16-byte node, contiguous arrays) appear in
Appendix~\ref{sec:app_implementation}.

\section{Analysis}\label{sec:analysis}

The analysis has three steps. Theorem~\ref{thm:decomp} writes
TreeWalker's per-tree work as an identity on every group: structural
terms (visited splits, leaf writes) plus a single \emph{predicate-mask
provisioning cost} $Q$ for supplying varying-feature partition masks.
Corollary~\ref{cor:trace} instantiates $Q$ for the trace evaluator and
yields the asymptote $1/(1-\bar\rho_G)$. Corollary~\ref{cor:impl}
instantiates $Q$ for the default precompute evaluator and bounds
forest-level work. \S\ref{sec:decomp_validation} checks both against
emitted counters.

\subsection{Setup and Definitions}\label{sec:theorem}\label{sec:complexity}

Fix a single binary tree~$\tau$ of depth at most~$L$. A \emph{group} is
a set $X = \{x_1, \ldots, x_G\}$ of rows sharing a constant-feature set
$F_c \subseteq \{1, \ldots, n\}$; the remaining features $F_v$ vary
across rows. For each node $u$ of $\tau$, write $A(u) \subseteq [G]$
for the \emph{active row set}: the rows that reach $u$ under
TreeWalker's partial evaluation of $\tau$ on $X$.

\begin{definition}[Constant projection]
$T_c(\tau, X)$ is the directed subgraph of $\tau$ obtained by deleting
the child not selected by the group's constant value at each
constant-feature split, while retaining both children at each
varying-feature split. By construction, every node visited under partial
evaluation lies in $T_c(\tau, X)$.
\end{definition}

\begin{definition}[Trace counters]
\begin{align*}
  C(\tau, X) &= \bigl|\{u \in \tau : u \text{ constant split}, A(u) \ne \emptyset\}\bigr| \\
  V(\tau, X) &= \bigl|\{u \in \tau : u \text{ varying split}, A(u) \ne \emptyset\}\bigr|
\end{align*}
\end{definition}

\begin{definition}[Path statistics]
For a row $x \in X$, let $d_\tau(x)$ be its root-to-leaf depth in $\tau$
(number of split nodes on its path), and $d_{v,\tau}(x)$ the number of
varying splits on that path. Define group averages
	$d(\tau, X) = \frac{1}{G} \sum_x d_\tau(x)$ and
	$d_v(\tau, X) = \frac{1}{G} \sum_x d_{v,\tau}(x)$.
\end{definition}

\paragraph{Cost model.}
We work in an \emph{abstract unit-cost model}: $W(\tau, X)$ counts
split-node visits, leaf writes, and predicate-mask provisioning
operations, each weighted as one unit. Wall-clock time is bounded above
by $W$ up to machine-dependent constants for each component (one
comparison per split visit, one $G$-bit mask op per leaf write, etc.);
we suppress these constants in the abstract bound. Charging one unit
per $G$-bit mask operation holds for groups of up to $64$ rows, whose
mask is one 16- to 64-bit word. Wider groups use masks of $K$ 64-bit
words, which make each mask operation $\Theta(K)$. The engine takes
groups of any width this way, in pieces of at most $1{,}024$ rows
(Appendix~\ref{sec:app_chunked}).

\paragraph{Predicate-mask provisioning.}
When queried at a varying split $u$ with active mask $A(u)$, a
\emph{predicate-mask provisioning step} computes the partition
$(A_L(u), A_R(u))$. Let $Q(\tau, X)$ be the aggregate cost of computing
the queried predicate masks; Corollaries~\ref{cor:trace}--\ref{cor:impl}
instantiate two strategies.

\subsection{Theorem}\label{sec:theorem_main}

\begin{theorem}[Structural work decomposition]\label{thm:decomp}
For any tree $\tau$ and group $X$ of size $G$, in the unit-cost model
above,
\begin{equation}\label{eq:decomp}
  W(\tau, X)
  \;=\;
  C(\tau, X) + V(\tau, X) + G + Q(\tau, X)
  \;\le\;
  |T_c(\tau, X)| + G + Q(\tau, X).
\end{equation}
\end{theorem}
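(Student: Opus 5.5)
We prove the identity by charging each unit of work to exactly one of the four categories. Then we prove the inequality by an injection from visited split nodes into $T_c(\tau,X)$. Throughout we take Algorithm~\ref{alg:partial_eval} with the unsplit shortcut of Lemma~\ref{lem:unsplit}, so recursion never enters a child whose mask is empty. Under this convention, a node $u$ is visited if and only if $A(u)\neq\emptyset$.

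\emph{Step 1 (split visits).} We induct on the recursion tree of \textsc{PartialEval}. The claim is that each split node $u$ is visited at most once, and exactly when $A(u)\ne\emptyset$. This holds because $\tau$ is a tree, so each node has a unique parent. The algorithm enters a child only from its parent, and does so at most once: one step of the constant walk at a constant split, or one recursive call per side at a varying split. The masks passed down are exactly the active sets $A(\cdot)$ by definition, since $A(u)$ is defined as the set of rows reaching $u$ under this very procedure. Hence the number of constant-split visits is $C(\tau,X)$ and the number of varying-split visits is $V(\tau,X)$, each costing one unit.

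\emph{Step 2 (leaf writes and provisioning).} Each row follows exactly one root-to-leaf path, because the masks at a varying split partition $A(u)$ into $A_L(u)\sqcup A_R(u)$. Induction therefore shows that the active sets of the visited leaves partition $[G]$. The loop over set bits performs one write per active row, so summing over visited leaves gives exactly $G$ writes. All remaining work consists of computing the partitions $(A_L(u),A_R(u))$ at visited varying splits, which is by definition $Q(\tau,X)$. We do not double-count the split visit itself against $Q$; the visit is charged only in Step~1. This yields the equality in \eqref{eq:decomp}.

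\emph{Step 3 (inequality).} Every visited split node lies in $T_c(\tau,X)$, as noted after the definition of the constant projection. We argue this by induction: the root is in $T_c$; at a constant split the algorithm only moves to the retained child; at a varying split both children are retained. Visited split nodes are distinct by Step~1, so $C+V$ is at most the number of split nodes of $T_c$, which is at most $|T_c(\tau,X)|$.

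The main obstacle is definitional rather than mathematical. The equality requires each node to be visited only when $A(u)\neq\emptyset$, which depends on the unsplit shortcut. The plain Algorithm~\ref{alg:partial_eval} would also visit empty-mask subtrees; these extra visits are still bounded by $|T_c|$ but break the identity. The equality also requires that the split-visit charge at a varying node be kept separate from the partition cost in $Q$. I would state both conventions explicitly at the start of the proof.
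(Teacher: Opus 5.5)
Your proposal is correct and follows the paper's proof essentially step for step. The paper likewise proves, by induction under the unsplit-enabled evaluator, that every node with non-empty active mask is visited exactly once (Lemma~\ref{lem:disjoint}). It then uses the exact partition $\sum_{\ell}|A(\ell)| = G$ for the leaf-write term, charges the partition work to $Q$, and obtains $C+V \le |T_c|$ because the visited split nodes are distinct members of $T_c$ (Lemma~\ref{lem:visit-bound}). Your caveat that the plain Algorithm~\ref{alg:partial_eval} without the unsplit shortcut breaks the identity matches the paper's ablation remark.
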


\paragraph{Proof sketch.}
Every visited split node contributes one unit of control work
(Lemma~\ref{lem:disjoint}: the unsplit-enabled evaluator visits each
node with non-empty active mask exactly once), giving $C + V$ in total.
Every row reaches exactly one leaf per tree, contributing $G$ leaf
writes. Provisioning queries at varying splits aggregate to $Q$.
Equality at unit cost follows; the structural inequality $C + V \le
|T_c|$ is Lemma~\ref{lem:visit-bound}. Full proofs of
Lemmas~\ref{lem:disjoint}--\ref{lem:stack} are in
Appendix~\ref{sec:app_proof}.

\subsection{Corollaries}\label{sec:corollaries}

\begin{corollary}[Trace evaluator]\label{cor:trace}
Under \emph{trace provisioning}, where each varying predicate is
evaluated per active row at the visit, $Q_{\mathrm{trace}}(\tau, X) =
\sum_{x \in X} d_{v,\tau}(x) = G \cdot d_v(\tau, X).$ Let the
row-independent baseline be $W_{\mathrm{full}}(\tau, X) = G \cdot
(d(\tau, X) + 1)$. Then
\begin{equation}\label{eq:trace-ratio}
  \frac{W_{\mathrm{trace}}(\tau, X)}{G(d + 1)}
  \;\le\;
  \frac{|T_c(\tau, X)|}{G(d + 1)}
  \;+\;
  \frac{d_v + 1}{d + 1}.
\end{equation}
For group distributions $\{\mathcal{D}_G\}$ with $|T_c|, d, d_v$ uniformly
bounded, define $1 - \bar\rho_G = \mathbb{E}[d_v + 1]/\mathbb{E}[d + 1]$. As
$G \to \infty$ the structural term vanishes and the per-row trace ratio
is asymptotically bounded by $1 - \bar\rho_G$, with asymptotic speedup
$1/(1 - \bar\rho_G) = \mathbb{E}[d + 1]/\mathbb{E}[d_v + 1]$.
\end{corollary}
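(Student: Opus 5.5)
The plan has three stages: compute $Q_{\mathrm{trace}}$ by counting per-row predicate evaluations, substitute it into Theorem~\ref{thm:decomp}, and then take group averages as $G \to \infty$.

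\textbf{Step 1: computing $Q_{\mathrm{trace}}$.} Under trace provisioning, a visit to a varying split $u$ costs exactly $|A(u)|$ units, one predicate evaluation per active row. Hence
\begin{equation*}
Q_{\mathrm{trace}}=\sum_{u\ \text{varying}} |A(u)|.
\end{equation*}
Splits with empty masks contribute nothing, by the unsplit shortcut (Lemma~\ref{lem:unsplit}). I would exchange the order of summation: $\sum_u |A(u)| = \sum_{x\in X} |\{u \text{ varying} : x\in A(u)\}|$. Lemma~\ref{lem:disjoint} says the children's masks partition the parent's mask. From this, an induction from the root shows that $x \in A(u)$ exactly when $u$ lies on $x$'s root-to-leaf path in $\tau$. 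At constant splits, all active rows follow the selected child, which is also the child each row would take individually. At varying splits, each row is routed by its own predicate. So the inner count equals $d_{v,\tau}(x)$, and summing gives $Q_{\mathrm{trace}} = G\,d_v(\tau,X)$.

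\textbf{Step 2: the bound \eqref{eq:trace-ratio}.} I would substitute this into Theorem~\ref{thm:decomp}, which gives
\begin{equation*}
W_{\mathrm{trace}} \le |T_c| + G + G d_v = |T_c| + G(d_v+1).
\end{equation*}
Dividing by $G(d+1)$ yields \eqref{eq:trace-ratio} directly. The row-independent baseline $G(d+1)$ is just the definition: each row makes $d_\tau(x)$ split visits plus one leaf write, which averages to $d+1$.

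\textbf{Step 3: the asymptotic statement.} The ratio-of-expectations definition of $\bar\rho_G$ means the per-group inequality cannot simply be averaged termwise. Instead, I would compare expected work to expected baseline:
\begin{equation*}
\frac{\mathbb{E}[W_{\mathrm{trace}}]}{\mathbb{E}[G(d+1)]} \le \frac{\mathbb{E}|T_c|}{G\,\mathbb{E}[d+1]} + \frac{\mathbb{E}[d_v+1]}{\mathbb{E}[d+1]}.
\end{equation*}
This uses the fact that $G$ is deterministic under $\mathcal{D}_G$. Uniform boundedness gives $\mathbb{E}|T_c| \le B$. Since $d \ge 0$, we have $\mathbb{E}[d+1] \ge 1$, so the first term is at most $B/G \to 0$. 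Therefore the $\limsup$ of the per-row ratio is at most $\lim(1-\bar\rho_G)$, interpreted along the sequence. The reciprocal gives the speedup $\mathbb{E}[d+1]/\mathbb{E}[d_v+1]$. This step also needs $\mathbb{E}[d_v+1] \ge 1 > 0$, which holds trivially.

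\textbf{Main obstacle.} I expect the hardest part to be Step 1's identification of $A(u)$ with the set of rows whose individual path passes through $u$. This must be stated carefully: Lemma~\ref{lem:disjoint} has to be invoked for the exact-once visit property, and missing-value and categorical routing must be handled so that the "child matching constant value" agrees with per-row routing. The asymptotic step is routine once expectations are taken over numerator and denominator separately rather than over the ratio.
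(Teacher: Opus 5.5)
Your proposal is correct and follows the paper's own proof. Like the paper, you count trace queries row by row to get $Q_{\mathrm{trace}} = G\,d_v$, substitute into Theorem~\ref{thm:decomp}, divide by $G(d+1)$, and let the structural term $|T_c|/(G(d+1)) \le B/G$ vanish. Your Step~3 bounds $\mathbb{E}[W_{\mathrm{trace}}]/(G\,\mathbb{E}[d+1])$ instead of averaging the per-group ratio, which is a more careful version of the paper's one-line asymptotic argument and the right way to match the ratio-of-expectations definition of $1-\bar\rho_G$. Since $1-\bar\rho_G$ need not converge, the cleanest form of your conclusion is the ratio being at most $(1-\bar\rho_G) + B/G$, not a $\limsup$.
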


\begin{corollary}[Default implementation]\label{cor:impl}
Under \emph{precompute provisioning}, the forest-level strategy used in
TreeWalker's default path (sort the $G$ group values per varying
feature, sweep against the deduplicated predicate set $\Pi_f$, emit a
bitmask per predicate), for an ensemble $\mathcal{F} = \{\tau_1,
\ldots, \tau_M\}$,
\begin{equation}\label{eq:Q-pre}
  Q_{\mathrm{pre}}(\mathcal{F}, X)
  \;=\;
  O\!\Biggl(
    \sum_{f \in F_v} \bigl( G \log G + |\Pi_f^{\mathrm{num}}| +
      G \cdot |\Pi_f^{\mathrm{cat}}| \bigr)
  \Biggr),
\end{equation}
paid once per group (not per tree). Total ensemble work is bounded by
$\sum_{i=1}^{M}\bigl(|T_c(\tau_i, X)| + G\bigr) +
Q_{\mathrm{pre}}(\mathcal{F}, X).$
\end{corollary}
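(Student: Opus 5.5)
The statement to prove is Corollary~\ref{cor:impl}. It has two parts: the cost $Q_{\mathrm{pre}}$ of the precompute strategy, and the forest-level total obtained by summing Theorem~\ref{thm:decomp} over trees. The plan is to charge $Q_{\mathrm{pre}}$ directly to the steps of the precompute procedure, and then to argue that per-tree provisioning collapses to a lookup that the split-visit term already pays for.

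\textbf{Cost of the precompute procedure.} Fix a varying feature $f \in F_v$. The procedure has three stages.
\begin{enumerate}
\item Sorting the $G$ group values of $f$, together with their row indices, costs $O(G\log G)$ comparisons. Missing values are separated in $O(G)$ and routed by a per-predicate default bit.
\item For the numeric thresholds $\Pi_f^{\mathrm{num}}$, sorted once at model load, a two-pointer sweep emits each predicate's mask. The mask for threshold $\theta$ is the set of rows whose value is at most $\theta$. Consecutive thresholds differ by the rows whose values fall between them, so each mask is produced from its predecessor by OR-ing those rows in. Every row is inserted once over the whole sweep, and each mask is written once in unit cost for $G\le 64$. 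The sweep therefore costs $O(G + |\Pi_f^{\mathrm{num}}|)$.
\item Each categorical predicate is a set-membership test. Testing the $G$ rows costs $O(G)$ per predicate, giving $O(G|\Pi_f^{\mathrm{cat}}|)$.
\end{enumerate}
Summing over $f\in F_v$ gives \eqref{eq:Q-pre}. This work depends only on $X$ and on the deduplicated predicate sets of the whole forest, so it is paid once per group rather than once per tree.

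\textbf{Assembling the forest bound.} At a varying split $u$ with predicate $\pi$, the partition becomes $A_L(u)=A(u)\wedge M_\pi$ and $A_R(u)=A(u)\wedge\neg M_\pi$, where $M_\pi$ is the precomputed mask. This is $O(1)$ word operations, which can be folded into the unit already charged to $u$ in $V(\tau_i,X)$. The per-tree provisioning cost $Q(\tau_i,X)$ is therefore $O(V(\tau_i,X))$, and the provisioning left over is exactly $Q_{\mathrm{pre}}$. Applying Theorem~\ref{thm:decomp} to each $\tau_i$ gives per-tree work $O(|T_c(\tau_i,X)|+G)$. Summing over $i=1,\dots,M$ and adding $Q_{\mathrm{pre}}$ yields the stated total.

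\textbf{Main obstacle.} The delicate step is the numeric sweep bound. The naive accounting charges $G$ per predicate and gives only $O(G|\Pi_f^{\mathrm{num}}|)$. The claimed bound needs the incremental construction, in which each row enters the running mask exactly once. It also relies on the unit-cost mask convention, so that copying a mask out costs $O(1)$.

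For $G>64$ this convention fails: each copy costs $\Theta(K)$, where $K$ is the number of 64-bit words per mask. The term $|\Pi_f^{\mathrm{num}}|$ would then become $K|\Pi_f^{\mathrm{num}}|$. I would state the corollary under the single-word convention of the cost model and remark on the $K$ factor for chunked groups.

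A secondary point is deduplication across trees. Distinct tree nodes that share a predicate must reference the same mask, which needs a load-time index from each node to its predicate. That is a preprocessing cost and is not charged per group.
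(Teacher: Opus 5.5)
Your proposal is correct and takes essentially the same route as the paper's proof. You charge the sort, the numeric two-pointer sweep and the categorical membership tests per varying feature to obtain \eqref{eq:Q-pre}. You then fold each varying split's $O(1)$ precomputed-mask lookup into its visit unit, sum Theorem~\ref{thm:decomp} over the $M$ trees, and add $Q_{\mathrm{pre}}$ once. Your $O(G + |\Pi_f^{\mathrm{num}}|)$ sweep accounting is an explicit version of the paper's ``amortized over the sort'', with the $O(G)$ absorbed by $G\log G$. Your single-word caveat for $G>64$ matches the cost model's own $\Theta(K)$ remark.
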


\paragraph{Note on cross-tree sharing.}
With prefix grouping disabled and predicate sharing disabled, the
ensemble bound reduces to the sum of per-tree statements of
Theorem~\ref{thm:decomp}. The default forest path strictly improves on
that sum: predicate sharing collapses $|\Pi_f|$ across trees, and
prefix grouping reduces $\sum_i (C(\tau_i, X) + V(\tau_i, X))$ via
shared visits at the prefix.

\paragraph{The $1/L$ floor.}
Under the per-row normalization $W/(G \cdot T \cdot L)$ of
\S\ref{sec:decomp_validation}, the $G$ leaf writes per tree alone fix a
structural floor of $1/L$. The provisioning share does not vanish as
$G \to \infty$: per row, the sort term grows as $\log G$ and the
categorical term stays constant. We claim only that this share is small
at measured $G$, because $Q_{\mathrm{pre}}$ is paid once per group
across all $T$ trees: in the widest measured groups the precompute
ratio is $12\%$ (FLCHAIN, $128$ rows) to $14\%$ (SUPPORT, $104$ rows)
above the $1/L$ floor (Table~\ref{tab:decomp-results}). That ratio uses the row-evaluation
proxy for $Q_{\mathrm{pre}}$, which omits the sort and predicate-sweep
terms (Table~\ref{tab:decomp-counters}), so it understates the true
provisioning share.

\subsection{Empirical Decomposition Validation}\label{sec:decomp_validation}

Table~\ref{tab:decomp-results} reports the per-row work ratio
$W(\mathcal{F}, X) / (G \cdot T \cdot L)$ at the reference
configuration ($T=500$, $L=8$, LightGBM, Intel) on the fixed-$G$
datasets. The sweep covers $G \in \{4, 16, 64, 128\}$ for both the
trace and precompute provisioning strategies; the rightmost column
gives the Corollary~\ref{cor:trace} asymptote. The ratio decreases monotonically in $G$ for both
strategies on both datasets and approaches its predicted limit:
$(d_v+1)/L$ for trace (dataset-dependent, $d_v$ varies) and $1/L$ for
precompute (dataset-independent). At $G=128$ the trace ratio is within $5\%$ of the
predicted limit on both datasets; precompute is within $12\%$ (FLCHAIN) and $14\%$ (SUPPORT) of its $1/L$ floor. The table divides by $G \cdot T \cdot L$ rather than
Corollary~\ref{cor:trace}'s $G(d+1)$, so the trace limit becomes
$(d_v+1)/L$.
Figure~\ref{fig:asymptotic-convergence} (Appendix~\ref{sec:app_sensitivity})
plots the same data on a log-$G$ axis with horizontal asymptote lines.

\begin{table}[h]
\centering
\small
\begin{tabular*}{\textwidth}{@{\extracolsep{\fill}}llrrrrr@{}}
\toprule
& & \multicolumn{4}{c}{Group size $G$} & Asymptote \\
\cmidrule(lr){3-6}
Dataset & Evaluator & 4 & 16 & 64 & 128 & $G \to \infty$ \\
\midrule
  SUPPORT & trace & 0.674 & 0.432 & 0.361 & 0.351 & \textit{0.334} \\
  SUPPORT & precompute (default) & 0.507 & 0.226 & 0.154 & 0.143 & \textit{0.125} \\
  FLCHAIN & trace & 0.544 & 0.451 & 0.439 & 0.431 & \textit{0.417} \\
  FLCHAIN & precompute (default) & 0.420 & 0.213 & 0.153 & 0.140 & \textit{0.125} \\
\bottomrule
\end{tabular*}
\caption{Per-row work ratio $W(\mathcal{F}, X) / (G \cdot T \cdot L)$
at the reference configuration ($T=500$, $L=8$, LightGBM,
Intel) on the fixed-$G$ datasets, swept across $G \in \{4, 16, 64, 128\}$.
The final column gives the Corollary~\ref{cor:trace} asymptote
under this baseline: $(d_v+1)/L$ for trace (with $d_v$ estimated from
\texttt{partition\_row\_evals} at $G=128$) and $1/L$ for precompute
(the leaf-write floor). Both
evaluators decrease monotonically in $G$ and approach their predicted
limits. XGBoost numbers (not shown) are within $6\%$ of LightGBM
except SUPPORT trace at $G{=}64$ ($13.5\%$ higher). SUPPORT's horizons
of $64$ and $128$ hold $58$ and $104$ rows (\S\ref{sec:datasets}). Expedia
is excluded from this sweep because it has variable group sizes; it
appears alongside the speedup grid in \S\ref{sec:evaluation}.}
\label{tab:decomp-results}
\end{table}

\section{Evaluation}\label{sec:evaluation}

\subsection{Experimental Setup}\label{sec:setup}

We evaluate TreeWalker on two survival datasets (SUPPORT, FLCHAIN) and
one CTR dataset (Expedia) across a factorial grid of model parameters,
and on a scenario-analysis workload (\S\ref{sec:scenario}).
The reference configuration ($T{=}500$, $L{=}8$, $G{=}16$) anchors the
grid at the mid-sweep; $G{=}16$ is the median survival horizon. We
compare TreeWalker against five external baselines
(LightGBM~\citep{ke2017lightgbm} and XGBoost native predict, tl2cgen,
lleaves, QuickScorer) plus an
internal full-walk baseline. The full-walk baseline isolates the
partial-evaluation contribution from data-layout effects,
distinguishing \emph{algorithmic speedup} (TreeWalker vs.\ full walk)
from \emph{system speedup} (TreeWalker vs.\ external baselines). Hardware (x86-64 Intel Xeon and Arm Neoverse V2 GCE VMs), software versions, compilation flags, the method
roster (Table~\ref{tab:methods}), and the blocked paired measurement
protocol are in Appendix~\ref{sec:app_details}; numerical precision is
in Appendix~\ref{sec:app_precision}. Latency is the median of
per-block medians in $\mu$s per observation (group); speedups are
ratios of latencies with paired bootstrap $95\%$ intervals, and
p5--p95 over blocks shows dispersion.

\subsection{Datasets}\label{sec:datasets}

Table~\ref{tab:datasets} summarizes the three factorial-grid datasets.

\begin{table}[h]
\centering
\caption{Datasets.  The constant fraction $|F_c|/(|F_c|+|F_v|)$ is a feature-level proxy for shareable work; the realized work-share is captured by the path statistic $(d_v+1)/L$ in Table~\ref{tab:decomp-results}. The scenario-analysis dataset is described in \S\ref{sec:scenario}.}\label{tab:datasets}
\begin{tabularx}{\textwidth}{@{}llrrrrr@{}}
\toprule
Dataset & Type & Groups & Rows/Group & Const Feat. & Vary Feat. & Const\% \\
\midrule
SUPPORT & Survival & 1{,}774 & Swept (1--104)   & 14 & 6  & 70\% \\
FLCHAIN & Survival & 1{,}304 & Swept (1--1{,}008) & 8  & 6  & 57\% \\
Expedia & CTR      & 10{,}000 & Variable (5--37) & 10 & 11 & 48\% \\
\bottomrule
\end{tabularx}
\end{table}

SUPPORT~\citep{knaus1995support} and FLCHAIN~\citep{dispenzieri2012flchain}
are standard survival datasets discretized into panel format. Each
observation expands into $G$ rows over a time horizon; six varying
features encode the time step, the remaining steps, the squared
elapsed and remaining fractions, one covariate--time interaction, and
a smooth hazard-shape term; SUPPORT's discretization yields fewer bins
than requested, so its horizons of $64$ and $128$ hold $58$ and $104$
rows. FLCHAIN also runs horizons of $256$ to $1{,}024$ (up to
$1{,}008$ rows). Expedia~\citep{expedia2013kaggle} contains hotel search
sessions where each session (group) consists of search results (rows). Constant features describe the user and search context;
varying features describe each hotel result (price, star rating,
location score). We subsample $50{,}000$ sessions with 2--128 results (seed $42$); an
$80/20$ train/test split yields $10{,}000$ test groups of $5$--$37$ rows.

\subsection{Algorithmic Speedup}\label{sec:factorial}

We sweep a factorial grid: datasets $D \in \{\text{SUPPORT},
\text{FLCHAIN}, \text{Expedia}\}$, frameworks $F \in \{\text{LightGBM},
\text{XGBoost}\}$, tree counts $T \in \{50, 500, 1000, 2000\}$, max
depths $L \in \{2, 4, 8, 16\}$, and group widths $G \in \{1, 2, 4, 8,
16, 32, 64, 128\}$ (survival only; Expedia uses empirical group sizes),
yielding $2{,}449$ measurement rows per architecture over $544$ cells.
Expedia cohorts, the encoded Expedia variant
(Appendix~\ref{sec:app_quickscorer}) and FLCHAIN's wide groups
(\S\ref{sec:sensitivity}) bring the grid to $712$ cells.

Figure~\ref{fig:speedup_heatmap} shows the speedup of TreeWalker over
the full-walk baseline at $G{=}16$ (survival) and empirical groups
(Expedia), aggregated across frameworks. Speedup exceeds $1.0\times$
in every $(T, L)$ heatmap cell, reaching $4.3\times$ (FLCHAIN, $T{=}500$, $L{=}2$); across the $544$ cells (all $G \le 128$, Intel), the largest algorithmic speedup is $9.1\times$ at $G{=}128$ on FLCHAIN
with XGBoost; Arm speedups are $0.84\times$ Intel's at the median of these cells (\S\ref{sec:sensitivity}). Outside the heatmap, speedup falls below $1\times$ mainly at
$G{=}1$ (no amortization) and at $G{=}2$ where workspace overhead
dominates --- the boundary conditions predicted by
Corollary~\ref{cor:trace}.

\begin{figure}[!htb]
\centering
\begin{tikzpicture}[
  x=7.5mm, y=7mm,
  font=\footnotesize,
  cell/.style={
    rectangle, minimum width=7.5mm, minimum height=7mm,
    draw=white, line width=0.4pt,
    inner sep=0pt, anchor=center,
  },
  rowlabel/.style={anchor=east, font=\scriptsize, inner sep=2pt},
  collabel/.style={anchor=north, font=\scriptsize, inner sep=2pt},
  ptitle/.style={font=\small, anchor=south},
  axlab/.style={font=\scriptsize},
]

\node[ptitle] at (1.5, 3.65) {(a)~FLCHAIN};
\node[cell, fill={rgb,255:red,46;green,91;blue,110}, text=white] at (0, 0) {3.77};
\node[cell, fill={rgb,255:red,40;green,39;blue,42}, text=white] at (1, 0) {4.28};
\node[cell, fill={rgb,255:red,42;green,59;blue,68}, text=white] at (2, 0) {4.09};
\node[cell, fill={rgb,255:red,43;green,61;blue,71}, text=white] at (3, 0) {4.06};
\node[cell, fill={rgb,255:red,49;green,102;blue,106}, text=white] at (0, 1) {3.66};
\node[cell, fill={rgb,255:red,50;green,103;blue,106}, text=white] at (1, 1) {3.64};
\node[cell, fill={rgb,255:red,50;green,104;blue,106}, text=white] at (2, 1) {3.63};
\node[cell, fill={rgb,255:red,48;green,98;blue,108}, text=white] at (3, 1) {3.70};
\node[cell, fill={rgb,255:red,124;green,164;blue,79}, text=black] at (0, 2) {2.95};
\node[cell, fill={rgb,255:red,126;green,165;blue,78}, text=black] at (1, 2) {2.94};
\node[cell, fill={rgb,255:red,128;green,166;blue,78}, text=black] at (2, 2) {2.93};
\node[cell, fill={rgb,255:red,142;green,172;blue,74}, text=black] at (3, 2) {2.86};
\node[cell, fill={rgb,255:red,228;green,184;blue,120}, text=black] at (0, 3) {2.30};
\node[cell, fill={rgb,255:red,215;green,183;blue,109}, text=black] at (1, 3) {2.39};
\node[cell, fill={rgb,255:red,214;green,183;blue,109}, text=black] at (2, 3) {2.40};
\node[cell, fill={rgb,255:red,209;green,183;blue,104}, text=black] at (3, 3) {2.43};
\node[collabel] at (0, -0.5) {50};
\node[collabel] at (1, -0.5) {500};
\node[collabel] at (2, -0.5) {1000};
\node[collabel] at (3, -0.5) {2000};
\node[rowlabel] at (-0.5, 0) {2};
\node[rowlabel] at (-0.5, 1) {4};
\node[rowlabel] at (-0.5, 2) {8};
\node[rowlabel] at (-0.5, 3) {16};
\node[rotate=90, axlab] at (-1.35, 1.5) {Max depth ($L$)};

\begin{scope}[xshift=3.7cm]
  \node[ptitle] at (1.5, 3.65) {(b)~SUPPORT};
  \node[cell, fill={rgb,255:red,78;green,146;blue,90}, text=black] at (0, 0) {3.18};
  \node[cell, fill={rgb,255:red,41;green,49;blue,55}, text=white] at (1, 0) {4.18};
  \node[cell, fill={rgb,255:red,43;green,63;blue,73}, text=white] at (2, 0) {4.05};
  \node[cell, fill={rgb,255:red,42;green,59;blue,68}, text=white] at (3, 0) {4.09};
  \node[cell, fill={rgb,255:red,211;green,183;blue,106}, text=black] at (0, 1) {2.42};
  \node[cell, fill={rgb,255:red,48;green,97;blue,108}, text=white] at (1, 1) {3.71};
  \node[cell, fill={rgb,255:red,45;green,78;blue,93}, text=white] at (2, 1) {3.90};
  \node[cell, fill={rgb,255:red,45;green,81;blue,97}, text=white] at (3, 1) {3.87};
  \node[cell, fill={rgb,255:red,237;green,187;blue,135}, text=black] at (0, 2) {2.18};
  \node[cell, fill={rgb,255:red,126;green,165;blue,78}, text=black] at (1, 2) {2.94};
  \node[cell, fill={rgb,255:red,155;green,177;blue,71}, text=black] at (2, 2) {2.80};
  \node[cell, fill={rgb,255:red,183;green,182;blue,82}, text=black] at (3, 2) {2.63};
  \node[cell, fill={rgb,255:red,233;green,208;blue,190}, text=black] at (0, 3) {1.79};
  \node[cell, fill={rgb,255:red,234;green,199;blue,167}, text=black] at (1, 3) {1.95};
  \node[cell, fill={rgb,255:red,234;green,201;blue,173}, text=black] at (2, 3) {1.91};
  \node[cell, fill={rgb,255:red,235;green,195;blue,157}, text=black] at (3, 3) {2.03};
  \node[collabel] at (0, -0.5) {50};
  \node[collabel] at (1, -0.5) {500};
  \node[collabel] at (2, -0.5) {1000};
  \node[collabel] at (3, -0.5) {2000};
\end{scope}

\begin{scope}[xshift=7.4cm]
  \node[ptitle] at (1.5, 3.65) {(c)~Expedia};
  \node[cell, fill={rgb,255:red,226;green,184;blue,118}, text=black] at (0, 0) {2.31};
  \node[cell, fill={rgb,255:red,95;green,153;blue,86}, text=black] at (1, 0) {3.09};
  \node[cell, fill={rgb,255:red,56;green,122;blue,100}, text=white] at (2, 0) {3.43};
  \node[cell, fill={rgb,255:red,47;green,94;blue,109}, text=white] at (3, 0) {3.74};
  \node[cell, fill={rgb,255:red,236;green,188;blue,137}, text=black] at (0, 1) {2.16};
  \node[cell, fill={rgb,255:red,57;green,128;blue,98}, text=white] at (1, 1) {3.38};
  \node[cell, fill={rgb,255:red,48;green,96;blue,108}, text=white] at (2, 1) {3.72};
  \node[cell, fill={rgb,255:red,43;green,66;blue,77}, text=white] at (3, 1) {4.02};
  \node[cell, fill={rgb,255:red,233;green,217;blue,207}, text=black] at (0, 2) {1.62};
  \node[cell, fill={rgb,255:red,147;green,174;blue,73}, text=black] at (1, 2) {2.84};
  \node[cell, fill={rgb,255:red,114;green,160;blue,81}, text=black] at (2, 2) {3.00};
  \node[cell, fill={rgb,255:red,104;green,156;blue,83}, text=black] at (3, 2) {3.05};
  \node[cell, fill={rgb,255:red,237;green,237;blue,237}, text=black] at (0, 3) {1.20};
  \node[cell, fill={rgb,255:red,232;green,210;blue,196}, text=black] at (1, 3) {1.75};
  \node[cell, fill={rgb,255:red,234;green,203;blue,176}, text=black] at (2, 3) {1.89};
  \node[cell, fill={rgb,255:red,235;green,198;blue,163}, text=black] at (3, 3) {1.98};
  \node[collabel] at (0, -0.5) {50};
  \node[collabel] at (1, -0.5) {500};
  \node[collabel] at (2, -0.5) {1000};
  \node[collabel] at (3, -0.5) {2000};
\end{scope}

\node[axlab] at (6.43, -1.4) {Number of trees ($T$)};

\begin{scope}[xshift=11.1cm, yshift=-7mm, x=1cm, y=1cm]
  \node[font=\scriptsize, anchor=south west] at (0, 3.65) {Speedup};
  \fill[fill={rgb,255:red,237;green,237;blue,237}] (0, 0.0000) rectangle (5mm, 0.0292);
  \fill[fill={rgb,255:red,237;green,236;blue,235}] (0, 0.0292) rectangle (5mm, 0.0583);
  \fill[fill={rgb,255:red,236;green,234;blue,233}] (0, 0.0583) rectangle (5mm, 0.0875);
  \fill[fill={rgb,255:red,236;green,233;blue,231}] (0, 0.0875) rectangle (5mm, 0.1167);
  \fill[fill={rgb,255:red,236;green,232;blue,230}] (0, 0.1167) rectangle (5mm, 0.1458);
  \fill[fill={rgb,255:red,236;green,231;blue,228}] (0, 0.1458) rectangle (5mm, 0.1750);
  \fill[fill={rgb,255:red,235;green,229;blue,226}] (0, 0.1750) rectangle (5mm, 0.2042);
  \fill[fill={rgb,255:red,235;green,228;blue,224}] (0, 0.2042) rectangle (5mm, 0.2333);
  \fill[fill={rgb,255:red,235;green,227;blue,222}] (0, 0.2333) rectangle (5mm, 0.2625);
  \fill[fill={rgb,255:red,235;green,226;blue,220}] (0, 0.2625) rectangle (5mm, 0.2917);
  \fill[fill={rgb,255:red,234;green,224;blue,218}] (0, 0.2917) rectangle (5mm, 0.3208);
  \fill[fill={rgb,255:red,234;green,223;blue,216}] (0, 0.3208) rectangle (5mm, 0.3500);
  \fill[fill={rgb,255:red,234;green,222;blue,215}] (0, 0.3500) rectangle (5mm, 0.3792);
  \fill[fill={rgb,255:red,234;green,221;blue,213}] (0, 0.3792) rectangle (5mm, 0.4083);
  \fill[fill={rgb,255:red,233;green,219;blue,211}] (0, 0.4083) rectangle (5mm, 0.4375);
  \fill[fill={rgb,255:red,233;green,218;blue,209}] (0, 0.4375) rectangle (5mm, 0.4667);
  \fill[fill={rgb,255:red,233;green,217;blue,207}] (0, 0.4667) rectangle (5mm, 0.4958);
  \fill[fill={rgb,255:red,233;green,216;blue,205}] (0, 0.4958) rectangle (5mm, 0.5250);
  \fill[fill={rgb,255:red,232;green,214;blue,203}] (0, 0.5250) rectangle (5mm, 0.5542);
  \fill[fill={rgb,255:red,232;green,213;blue,202}] (0, 0.5542) rectangle (5mm, 0.5833);
  \fill[fill={rgb,255:red,232;green,212;blue,199}] (0, 0.5833) rectangle (5mm, 0.6125);
  \fill[fill={rgb,255:red,232;green,210;blue,196}] (0, 0.6125) rectangle (5mm, 0.6417);
  \fill[fill={rgb,255:red,233;green,209;blue,192}] (0, 0.6417) rectangle (5mm, 0.6708);
  \fill[fill={rgb,255:red,233;green,208;blue,189}] (0, 0.6708) rectangle (5mm, 0.7000);
  \fill[fill={rgb,255:red,233;green,206;blue,185}] (0, 0.7000) rectangle (5mm, 0.7292);
  \fill[fill={rgb,255:red,233;green,205;blue,181}] (0, 0.7292) rectangle (5mm, 0.7583);
  \fill[fill={rgb,255:red,234;green,203;blue,178}] (0, 0.7583) rectangle (5mm, 0.7875);
  \fill[fill={rgb,255:red,234;green,202;blue,174}] (0, 0.7875) rectangle (5mm, 0.8167);
  \fill[fill={rgb,255:red,234;green,200;blue,170}] (0, 0.8167) rectangle (5mm, 0.8458);
  \fill[fill={rgb,255:red,234;green,199;blue,167}] (0, 0.8458) rectangle (5mm, 0.8750);
  \fill[fill={rgb,255:red,235;green,198;blue,163}] (0, 0.8750) rectangle (5mm, 0.9042);
  \fill[fill={rgb,255:red,235;green,196;blue,159}] (0, 0.9042) rectangle (5mm, 0.9333);
  \fill[fill={rgb,255:red,235;green,195;blue,156}] (0, 0.9333) rectangle (5mm, 0.9625);
  \fill[fill={rgb,255:red,235;green,193;blue,152}] (0, 0.9625) rectangle (5mm, 0.9917);
  \fill[fill={rgb,255:red,236;green,192;blue,149}] (0, 0.9917) rectangle (5mm, 1.0208);
  \fill[fill={rgb,255:red,236;green,191;blue,145}] (0, 1.0208) rectangle (5mm, 1.0500);
  \fill[fill={rgb,255:red,236;green,189;blue,141}] (0, 1.0500) rectangle (5mm, 1.0792);
  \fill[fill={rgb,255:red,236;green,188;blue,138}] (0, 1.0792) rectangle (5mm, 1.1083);
  \fill[fill={rgb,255:red,237;green,186;blue,134}] (0, 1.1083) rectangle (5mm, 1.1375);
  \fill[fill={rgb,255:red,237;green,185;blue,130}] (0, 1.1375) rectangle (5mm, 1.1667);
  \fill[fill={rgb,255:red,236;green,184;blue,127}] (0, 1.1667) rectangle (5mm, 1.1958);
  \fill[fill={rgb,255:red,232;green,184;blue,124}] (0, 1.1958) rectangle (5mm, 1.2250);
  \fill[fill={rgb,255:red,229;green,184;blue,121}] (0, 1.2250) rectangle (5mm, 1.2542);
  \fill[fill={rgb,255:red,225;green,184;blue,118}] (0, 1.2542) rectangle (5mm, 1.2833);
  \fill[fill={rgb,255:red,222;green,184;blue,115}] (0, 1.2833) rectangle (5mm, 1.3125);
  \fill[fill={rgb,255:red,218;green,183;blue,112}] (0, 1.3125) rectangle (5mm, 1.3417);
  \fill[fill={rgb,255:red,215;green,183;blue,109}] (0, 1.3417) rectangle (5mm, 1.3708);
  \fill[fill={rgb,255:red,211;green,183;blue,106}] (0, 1.3708) rectangle (5mm, 1.4000);
  \fill[fill={rgb,255:red,208;green,183;blue,103}] (0, 1.4000) rectangle (5mm, 1.4292);
  \fill[fill={rgb,255:red,204;green,183;blue,100}] (0, 1.4292) rectangle (5mm, 1.4583);
  \fill[fill={rgb,255:red,201;green,183;blue,97}] (0, 1.4583) rectangle (5mm, 1.4875);
  \fill[fill={rgb,255:red,197;green,183;blue,94}] (0, 1.4875) rectangle (5mm, 1.5167);
  \fill[fill={rgb,255:red,193;green,183;blue,91}] (0, 1.5167) rectangle (5mm, 1.5458);
  \fill[fill={rgb,255:red,190;green,183;blue,88}] (0, 1.5458) rectangle (5mm, 1.5750);
  \fill[fill={rgb,255:red,186;green,183;blue,85}] (0, 1.5750) rectangle (5mm, 1.6042);
  \fill[fill={rgb,255:red,183;green,182;blue,82}] (0, 1.6042) rectangle (5mm, 1.6333);
  \fill[fill={rgb,255:red,179;green,182;blue,79}] (0, 1.6333) rectangle (5mm, 1.6625);
  \fill[fill={rgb,255:red,176;green,182;blue,76}] (0, 1.6625) rectangle (5mm, 1.6917);
  \fill[fill={rgb,255:red,172;green,182;blue,73}] (0, 1.6917) rectangle (5mm, 1.7208);
  \fill[fill={rgb,255:red,169;green,182;blue,70}] (0, 1.7208) rectangle (5mm, 1.7500);
  \fill[fill={rgb,255:red,164;green,181;blue,69}] (0, 1.7500) rectangle (5mm, 1.7792);
  \fill[fill={rgb,255:red,159;green,179;blue,70}] (0, 1.7792) rectangle (5mm, 1.8083);
  \fill[fill={rgb,255:red,154;green,177;blue,71}] (0, 1.8083) rectangle (5mm, 1.8375);
  \fill[fill={rgb,255:red,148;green,174;blue,73}] (0, 1.8375) rectangle (5mm, 1.8667);
  \fill[fill={rgb,255:red,143;green,172;blue,74}] (0, 1.8667) rectangle (5mm, 1.8958);
  \fill[fill={rgb,255:red,138;green,170;blue,75}] (0, 1.8958) rectangle (5mm, 1.9250);
  \fill[fill={rgb,255:red,132;green,168;blue,77}] (0, 1.9250) rectangle (5mm, 1.9542);
  \fill[fill={rgb,255:red,127;green,166;blue,78}] (0, 1.9542) rectangle (5mm, 1.9833);
  \fill[fill={rgb,255:red,122;green,164;blue,79}] (0, 1.9833) rectangle (5mm, 2.0125);
  \fill[fill={rgb,255:red,116;green,161;blue,80}] (0, 2.0125) rectangle (5mm, 2.0417);
  \fill[fill={rgb,255:red,111;green,159;blue,82}] (0, 2.0417) rectangle (5mm, 2.0708);
  \fill[fill={rgb,255:red,106;green,157;blue,83}] (0, 2.0708) rectangle (5mm, 2.1000);
  \fill[fill={rgb,255:red,100;green,155;blue,84}] (0, 2.1000) rectangle (5mm, 2.1292);
  \fill[fill={rgb,255:red,95;green,153;blue,86}] (0, 2.1292) rectangle (5mm, 2.1583);
  \fill[fill={rgb,255:red,90;green,151;blue,87}] (0, 2.1583) rectangle (5mm, 2.1875);
  \fill[fill={rgb,255:red,84;green,148;blue,88}] (0, 2.1875) rectangle (5mm, 2.2167);
  \fill[fill={rgb,255:red,79;green,146;blue,90}] (0, 2.2167) rectangle (5mm, 2.2458);
  \fill[fill={rgb,255:red,73;green,144;blue,91}] (0, 2.2458) rectangle (5mm, 2.2750);
  \fill[fill={rgb,255:red,68;green,142;blue,92}] (0, 2.2750) rectangle (5mm, 2.3042);
  \fill[fill={rgb,255:red,63;green,140;blue,94}] (0, 2.3042) rectangle (5mm, 2.3333);
  \fill[fill={rgb,255:red,60;green,137;blue,95}] (0, 2.3333) rectangle (5mm, 2.3625);
  \fill[fill={rgb,255:red,60;green,135;blue,95}] (0, 2.3625) rectangle (5mm, 2.3917);
  \fill[fill={rgb,255:red,59;green,133;blue,96}] (0, 2.3917) rectangle (5mm, 2.4208);
  \fill[fill={rgb,255:red,58;green,130;blue,97}] (0, 2.4208) rectangle (5mm, 2.4500);
  \fill[fill={rgb,255:red,57;green,128;blue,98}] (0, 2.4500) rectangle (5mm, 2.4792);
  \fill[fill={rgb,255:red,57;green,125;blue,99}] (0, 2.4792) rectangle (5mm, 2.5083);
  \fill[fill={rgb,255:red,56;green,123;blue,99}] (0, 2.5083) rectangle (5mm, 2.5375);
  \fill[fill={rgb,255:red,55;green,120;blue,100}] (0, 2.5375) rectangle (5mm, 2.5667);
  \fill[fill={rgb,255:red,54;green,118;blue,101}] (0, 2.5667) rectangle (5mm, 2.5958);
  \fill[fill={rgb,255:red,54;green,116;blue,102}] (0, 2.5958) rectangle (5mm, 2.6250);
  \fill[fill={rgb,255:red,53;green,113;blue,103}] (0, 2.6250) rectangle (5mm, 2.6542);
  \fill[fill={rgb,255:red,52;green,111;blue,103}] (0, 2.6542) rectangle (5mm, 2.6833);
  \fill[fill={rgb,255:red,51;green,108;blue,104}] (0, 2.6833) rectangle (5mm, 2.7125);
  \fill[fill={rgb,255:red,51;green,106;blue,105}] (0, 2.7125) rectangle (5mm, 2.7417);
  \fill[fill={rgb,255:red,50;green,104;blue,106}] (0, 2.7417) rectangle (5mm, 2.7708);
  \fill[fill={rgb,255:red,49;green,101;blue,107}] (0, 2.7708) rectangle (5mm, 2.8000);
  \fill[fill={rgb,255:red,48;green,99;blue,107}] (0, 2.8000) rectangle (5mm, 2.8292);
  \fill[fill={rgb,255:red,48;green,96;blue,108}] (0, 2.8292) rectangle (5mm, 2.8583);
  \fill[fill={rgb,255:red,47;green,94;blue,109}] (0, 2.8583) rectangle (5mm, 2.8875);
  \fill[fill={rgb,255:red,46;green,91;blue,110}] (0, 2.8875) rectangle (5mm, 2.9167);
  \fill[fill={rgb,255:red,46;green,89;blue,107}] (0, 2.9167) rectangle (5mm, 2.9458);
  \fill[fill={rgb,255:red,45;green,86;blue,104}] (0, 2.9458) rectangle (5mm, 2.9750);
  \fill[fill={rgb,255:red,45;green,84;blue,100}] (0, 2.9750) rectangle (5mm, 3.0042);
  \fill[fill={rgb,255:red,45;green,81;blue,97}] (0, 3.0042) rectangle (5mm, 3.0333);
  \fill[fill={rgb,255:red,45;green,78;blue,93}] (0, 3.0333) rectangle (5mm, 3.0625);
  \fill[fill={rgb,255:red,44;green,76;blue,90}] (0, 3.0625) rectangle (5mm, 3.0917);
  \fill[fill={rgb,255:red,44;green,73;blue,87}] (0, 3.0917) rectangle (5mm, 3.1208);
  \fill[fill={rgb,255:red,44;green,70;blue,83}] (0, 3.1208) rectangle (5mm, 3.1500);
  \fill[fill={rgb,255:red,43;green,68;blue,80}] (0, 3.1500) rectangle (5mm, 3.1792);
  \fill[fill={rgb,255:red,43;green,65;blue,76}] (0, 3.1792) rectangle (5mm, 3.2083);
  \fill[fill={rgb,255:red,43;green,63;blue,73}] (0, 3.2083) rectangle (5mm, 3.2375);
  \fill[fill={rgb,255:red,42;green,60;blue,69}] (0, 3.2375) rectangle (5mm, 3.2667);
  \fill[fill={rgb,255:red,42;green,57;blue,66}] (0, 3.2667) rectangle (5mm, 3.2958);
  \fill[fill={rgb,255:red,42;green,55;blue,63}] (0, 3.2958) rectangle (5mm, 3.3250);
  \fill[fill={rgb,255:red,42;green,52;blue,59}] (0, 3.3250) rectangle (5mm, 3.3542);
  \fill[fill={rgb,255:red,41;green,49;blue,56}] (0, 3.3542) rectangle (5mm, 3.3833);
  \fill[fill={rgb,255:red,41;green,47;blue,52}] (0, 3.3833) rectangle (5mm, 3.4125);
  \fill[fill={rgb,255:red,41;green,44;blue,49}] (0, 3.4125) rectangle (5mm, 3.4417);
  \fill[fill={rgb,255:red,40;green,42;blue,45}] (0, 3.4417) rectangle (5mm, 3.4708);
  \fill[fill={rgb,255:red,40;green,39;blue,42}] (0, 3.4708) rectangle (5mm, 3.5000);
  \node[anchor=west, font=\scriptsize, inner sep=1pt] at (5.5mm, 0.3361) {1.5};
  \node[anchor=west, font=\scriptsize, inner sep=1pt] at (5.5mm, 0.9046) {2.0};
  \node[anchor=west, font=\scriptsize, inner sep=1pt] at (5.5mm, 1.4730) {2.5};
  \node[anchor=west, font=\scriptsize, inner sep=1pt] at (5.5mm, 2.0415) {3.0};
  \node[anchor=west, font=\scriptsize, inner sep=1pt] at (5.5mm, 2.6100) {3.5};
  \node[anchor=west, font=\scriptsize, inner sep=1pt] at (5.5mm, 3.1785) {4.0};
  \draw[line width=0.3pt] (0, 0) rectangle (5mm, 3.5);
\end{scope}

\end{tikzpicture}
\caption{Algorithmic speedup across all $(T, L)$ combinations on Intel,
for (a)~FLCHAIN ($G{=}16$), (b)~SUPPORT ($G{=}16$), and (c)~Expedia
(empirical groups). Each cell: arithmetic mean of the two
framework-specific speedups. Darker = higher speedup.}\label{fig:speedup_heatmap}
\end{figure}

\subsection{System Comparison}\label{sec:system_comparison}

Figure~\ref{fig:method_comparison} compares all methods at the
reference. TreeWalker is $7.1\times$ faster than LightGBM native and
$9.0\times$ faster than XGBoost native on SUPPORT, decomposing into a
$2.5\times$ data-layout effect (full-walk vs.\ LightGBM native) plus a
$2.9\times$ partial-evaluation effect on top. Against tl2cgen and
lleaves (native flags) TreeWalker is $2.1$--$2.3\times$ and $1.1$--$1.2\times$ faster at $G{=}16$ on LightGBM survival; the lleaves
gap grows to $2.1$--$2.6\times$ at $G{=}128$ as group structure
amortizes. Among the $544$ cells, TreeWalker's largest LightGBM-native system speedup is $54\times$ ($T{=}2000, L{=}8, G{=}128$ on SUPPORT, $33$\,ms vs.\ $0.61$\,ms), combining algorithmic speedup with the data-layout/API
gap to native LightGBM. On Expedia/XGBoost at the reference, TreeWalker is
$6.0\times$ faster ($168$ vs.\ $1{,}015$\,$\mu$s). QuickScorer computes a different function
on nearly all of these models (no categorical splits or missing-value routing); the
methods are on orthogonal axes (per-walk vs.\ number-of-walks), and
Appendix~\ref{sec:app_quickscorer} compares them on an encoded Expedia
variant.

\begin{figure}[!htb]
\centering
\includegraphics[width=0.9\textwidth]{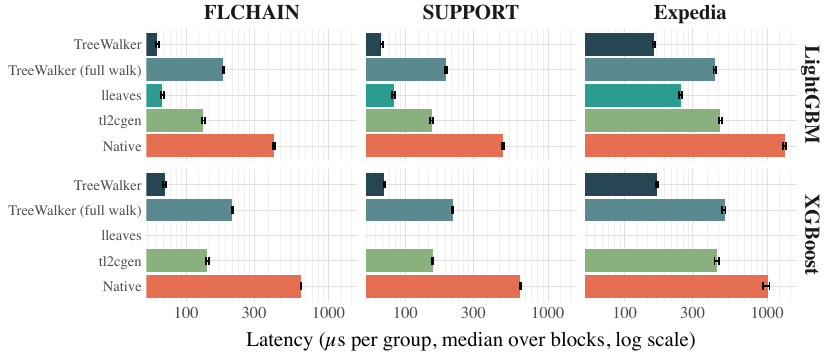}
\caption{Per-observation latency ($\mu$s) for each inference method at the reference configuration ($T{=}500$, $L{=}8$, $G{=}16$), on Intel.  Error bars: p5--p95 over the cell's $48$ measurement blocks.}\label{fig:method_comparison}
\end{figure}

\subsection{Group-Width Sensitivity}\label{sec:sensitivity}

Figure~\ref{fig:horizon_amortization} shows the central sensitivity:
algorithmic speedup grows monotonically with $G$, from $\sim 1\times$
at $G{=}1$ to $7.7$--$9.1\times$ at $G{=}128$ on both survival datasets. This is the
wall-clock counterpart of the work-ratio convergence visualized in
Figure~\ref{fig:asymptotic-convergence} (Appendix~\ref{sec:app_sensitivity}):
the same x-axis ($G$). On the survival datasets, shallower trees give higher speedup
($3.9$--$4.7\times$ at $L{=}2$ vs.\ $1.9$--$2.6\times$ at $L{=}16$);
speedup declines from $T{=}500$ ($2.9$--$3.0\times$) to $T{=}2000$ ($2.5$--$2.9\times$). Arm Neoverse V2 speedups are $0.84\times$ Intel Xeon's at the median cell. Figure~\ref{fig:horizon_amortization}(c) shows Expedia cohorts of $4$--$32$ rows; speedup increases monotonically with cohort size. Detailed plots are in Appendix~\ref{sec:app_sensitivity}.

On FLCHAIN's horizons of $256$ to $1{,}024$ ($1{,}008$ rows), whole groups
reach up to $17.1\times$ algorithmic speedup on Intel ($10.1\times$ on
Arm), and $128$-row chunks take $1.17$--$1.22\times$ the latency of whole groups
on Intel and $1.12$--$1.20\times$ on Arm (medians over models and
frameworks; Appendix~\ref{sec:app_chunked}).

\begin{figure}[!htb]
\centering
\includegraphics[width=0.85\textwidth]{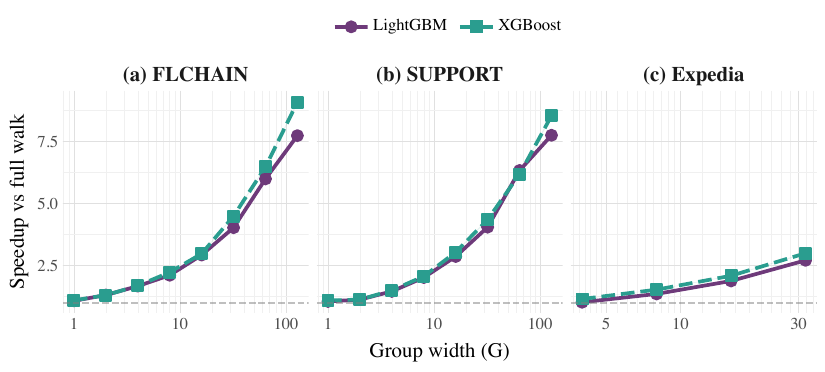}
\caption{Algorithmic speedup vs.\ group width $G$ ($T{=}500$, $L{=}8$, Intel).  (a)~FLCHAIN, (b)~SUPPORT, (c)~Expedia cohorts by size.  Ribbons: $95\%$ intervals.}\label{fig:horizon_amortization}
\end{figure}

\subsection{Scenario Analysis}\label{sec:scenario}

Scenario analysis generalizes the survival workloads: discrete-time
expansion varies the time index, while a what-if query varies an
arbitrary feature subset. We benchmark it on the UCI Default of Credit
Card Clients dataset~\citep{yeh2009credit} ($30{,}000$ rows, $23$
numeric features; LightGBM $T{=}500$, $L{=}8$, seed $42$). Each of
$2{,}000$ test rows expands into $G$ variants that scale $k$ selected
features by log-uniform multipliers in $[0.5, 2]$ and keep the other
$23-k$ features identical. Multiplicative
shocks leave zero-valued entries constant, so $k$ counts selected
features, not realized varying ones: at $k{=}8$ the mean realized
varying dimensionality is $6.80$ of $8$, and we measure $d_v$ on the
realized groups. We sweep $k \in \{1,2,4,8\}$ and
$G \in \{4,16,64,128\}$ on both architectures
(Appendix~\ref{sec:app_scenario}).

\begin{table}[h]
\centering
\small
\caption{Scenario analysis at $G{=}128$: measured trace-evaluator work
ratio $W/(G \cdot T \cdot L)$ against the Corollary~\ref{cor:trace}
prediction under this baseline, $(d_v+1)/L$, and algorithmic speedup over the
same-layout full walk. Work counters are identical on both
architectures. $k$ counts selected perturbed features.}\label{tab:scenario}
\begin{tabular*}{\textwidth}{@{\extracolsep{\fill}}rrrrrr@{}}
\toprule
$k$ & $d_v$ & Measured ratio & Predicted $(d_v+1)/L$ & Speedup (Arm) & Speedup (Intel) \\
\midrule
1 & 0.50 & 0.195 & 0.188 & $9.2\times$ & $15.2\times$ \\
2 & 1.15 & 0.276 & 0.268 & $8.4\times$ & $12.8\times$ \\
4 & 2.09 & 0.395 & 0.386 & $7.4\times$ & $10.3\times$ \\
8 & 3.81 & 0.611 & 0.601 & $6.2\times$ & $8.1\times$ \\
\bottomrule
\end{tabular*}
\end{table}

TreeWalker is faster than the same-layout full walk in all $16$ cells
on each architecture, and its speedup rises with $G$ for every $k$.
Speedup ranges from $1.96$--$15.2\times$ at $k{=}1$ to $1.08$--$8.1\times$
at $k{=}8$ on Intel, and from $1.77$--$9.2\times$ to $1.04$--$6.2\times$
on Arm. At $G{=}128$ the measured work ratio
exceeds the Corollary~\ref{cor:trace} prediction by $1.7$--$3.7\%$
across $k$ (Table~\ref{tab:scenario}).

\subsection{Ablation Study}\label{sec:ablation}

We ablate eight binary flags (unsplit shortcut, monotonic
early-termination, precomputation, tree ordering, prefix grouping,
bitset interning, predicate sweep, predicate deduplication), each
disabled in turn at one anchor per dataset (the reference for SUPPORT
and Expedia, $G{=}4$ for FLCHAIN). The unsplit-shortcut
ablation is the direct empirical analogue of Lemma~\ref{lem:unsplit}:
disabling it lets the algorithm descend into empty-mask subtrees, so
the change in TreeWalker's emitted call count is at least $U(\tau, X) =
V - B$, where $B$ counts varying splits with both children non-empty.
We do not claim a wall-clock multiplier from this lemma; the
call-count effect is structural. Precomputation and predicate deduplication
have the largest wall-clock effects (SUPPORT $+$21--39\% and
$+$28--34\%, FLCHAIN $+$11\% and $+$23--27\%, Expedia $+$93--139\%
and $+$77--82\%); tree ordering adds $+$3--8\% on the survival anchors
and at most $+$1\% on Expedia, and monotonic early-termination,
active only without precomputation, changes sign ($-$3\% to $+$7\%)
(Figure~\ref{fig:ablation_waterfall}, Appendix~\ref{sec:app_sensitivity}).

\section{Discussion}\label{sec:discussion}\label{sec:limitations}\label{sec:generalization}

\paragraph{What the theorem buys.}
Theorem~\ref{thm:decomp}'s exact equality lets the unsplit-shortcut
ablation (\S\ref{sec:ablation}) bound Lemma~\ref{lem:unsplit}'s call-count delta from below, and lets the precompute ablation swap
$Q_{\mathrm{trace}}$ for $Q_{\mathrm{pre}}$ as the corollaries name.

\paragraph{Composability.}
Partial evaluation lives on a third axis (number of walks) and composes
with the per-visit and per-walk axes from \S\ref{sec:gbdt}. Two
compositions remain unimplemented: a QuickScorer-style inner kernel
for the varying subtrees, and a compiled or vectorized inner loop
layered under the per-group partition. The
decomposition predicts that such compositions reduce $|T_c|$, the
per-visit cost, and $Q$ by independent factors.

\paragraph{Relation to compilation-based approaches.}
lleaves~\citep{boehm2021lleaves} and tl2cgen~\citep{tl2cgen2024}
specialize a tree to compiled code at parse time; partial evaluation specializes a tree to
an input \emph{group} at predict time, with constant features as the
binding-time signal; the two compose as the inner loop and the outer
recursion of Algorithm~\ref{alg:partial_eval}. In the Jones, Gomard, and
Sestoft~\citep{jones1993partial} taxonomy this is the same
\emph{specialization-followed-by-residualization} pattern that
Futamura's first projection~\citep{futamura1999partial} described for
staged interpreters; we residualize against group-level constants
rather than program-level ones.

\paragraph{Scope and limitations.}
The implementation is single-threaded CPU and reads scalar models with
an identity or sigmoid output and sum or average aggregation; we
measure binary-logistic GBDTs only. Evaluation covers two survival, one
CTR/ranking, and one scenario-analysis workload. We do not measure the
cost of constructing groups from unstructured batches; the target
workloads supply groups by construction (\S\ref{sec:introduction}). The f32 audit (Appendix~\ref{sec:app_precision}) shows a roundoff improvement
over native f32, not a model-quality property. The per-tree theorem is
the headline statement; cross-tree improvements (prefix grouping,
predicate sharing) only strengthen the bound and are not separately
theorized. The decomposition validates the work identity, not
wall-clock constant factors. The binding-time partition must be known
at predict time; mixed groups (partially overlapping constants) need
finer-grained bookkeeping. The asymptote is reached only at $G
\gtrsim 64$ in our experiments; small-group regimes pay workspace overhead.

\paragraph{Boundary cases.}
TreeWalker is slower than its own full walk in $33/712$ Intel cells
($78/712$ Arm), all in small groups: survival panels of $G \le 2$ (Arm
also $G{=}4$) and Expedia cohorts of $4$--$8$ rows (Arm one of $16$);
$13$ ($28$) are within $\pm 2\%$, and the worst is $1.29\times$ on Intel
and $1.52\times$ on Arm. TreeWalker beats both native libraries in all
$356$ cells of each framework: narrowest $1.01\times$ (Intel) and
$1.06\times$ (Arm) against LightGBM, on Expedia cohorts of $4$, and
$1.18\times$ and $1.50\times$ against XGBoost. Compiled baselines win
mostly at small $G$: tl2cgen is faster in $122/712$ Intel cells
($70/712$ Arm), $92$ of them at $G \le 4$; lleaves in $94/356$
($56/356$), $83$ at $G \le 8$.

\section{Conclusion}\label{sec:conclusion}

We presented TreeWalker, an inference engine that exploits input-side
correlation across the rows of a grouped prediction request through
recursive partial evaluation, paired with a structural work
decomposition (Theorem~\ref{thm:decomp}) that splits per-tree work
into a control term, a leaf-write term, and a predicate-mask
provisioning term. On Intel, across SUPPORT, FLCHAIN, and Expedia, TreeWalker delivers
$2.7$--$9.1\times$ algorithmic speedup over a row-independent baseline
($2.5$--$6.0\times$ on Arm), and it is faster than that baseline in all $16$
scenario-analysis cells on each architecture; the trace-evaluator
asymptote from Corollary~\ref{cor:trace} is within $5\%$ of measurement
at $G{=}128$ on both survival datasets.

Four extensions follow directly: (i)~measured random-forest timings,
since traversal and mask partitioning do not depend on the training
procedure; (ii)~hierarchical subgrouping for groups with partially
overlapping constants; (iii)~composing with QuickScorer could speed up
shallow $L$, where QuickScorer is faster; (iv)~predicate sharing across
requests would amortize $Q_{\mathrm{pre}}$ across a serving session, a
natural fit for survival-curve and ranking pipelines. GPU execution is
out of scope: we target latency-critical CPU serving.

\begin{ack}
This work was funded by Shopify, the employer of both authors. The
authors have no competing interests beyond their employment at Shopify.
\end{ack}

\bibliographystyle{unsrtnat}
\bibliography{references}

\begin{thebibliography}{28}
\providecommand{\natexlab}[1]{#1}
\providecommand{\url}[1]{\texttt{#1}}
\expandafter\ifx\csname urlstyle\endcsname\relax
  \providecommand{\doi}[1]{doi: #1}\else
  \providecommand{\doi}{doi: \begingroup \urlstyle{rm}\Url}\fi

\bibitem[Chen and Guestrin(2016)]{chen2016xgboost}
Tianqi Chen and Carlos Guestrin.
\newblock {XGBoost}: A scalable tree boosting system.
\newblock In \emph{Proceedings of the 22nd ACM SIGKDD International Conference
  on Knowledge Discovery and Data Mining}, pages 785--794, 2016.
\newblock \doi{10.1145/2939672.2939785}.

\bibitem[Boehm(2025)]{boehm2021lleaves}
Simon Boehm.
\newblock lleaves: Compiler for {LightGBM} gradient-boosted trees, based on
  {LLVM} (v1.4.1).
\newblock \url{https://github.com/siboehm/lleaves/tree/v1.4.1}, 2025.

\bibitem[Nakandala et~al.(2020)Nakandala, Saur, Yu, Karanasos, Curino, Weimer,
  and Interlandi]{nakandala2020mlprediction}
Supun Nakandala, Karla Saur, Gyeong-In Yu, Konstantinos Karanasos, Carlo
  Curino, Markus Weimer, and Matteo Interlandi.
\newblock A tensor compiler for unified machine learning prediction serving.
\newblock In \emph{Proceedings of the 14th USENIX Symposium on Operating
  Systems Design and Implementation (OSDI)}, pages 899--917, 2020.
\newblock URL
  \url{https://www.usenix.org/conference/osdi20/presentation/nakandala}.

\bibitem[{NVIDIA}(2026)]{nvidia2024fil}
{NVIDIA}.
\newblock {RAPIDS} {Forest Inference Library} ({FIL}), {cuML} 26.04.
\newblock
  \url{https://github.com/NVIDIA/cuml/tree/v26.04.00/python/cuml/cuml/fil},
  2026.

\bibitem[Asadi et~al.(2014)Asadi, Lin, and de~Vries]{asadi2014runtime}
Nima Asadi, Jimmy Lin, and Arjen~P. de~Vries.
\newblock Runtime optimizations for tree-based machine learning models.
\newblock \emph{IEEE Transactions on Knowledge and Data Engineering},
  26\penalty0 (9):\penalty0 2281--2292, 2014.
\newblock \doi{10.1109/TKDE.2013.73}.

\bibitem[Lucchese et~al.(2015)Lucchese, Nardini, Orlando, Perego, Tonellotto,
  and Venturini]{lucchese2015quickscorer}
Claudio Lucchese, Franco~Maria Nardini, Salvatore Orlando, Raffaele Perego,
  Nicola Tonellotto, and Rossano Venturini.
\newblock {QuickScorer}: A fast algorithm to rank documents with additive
  ensembles of regression trees.
\newblock In \emph{Proceedings of the 38th International ACM SIGIR Conference
  on Research and Development in Information Retrieval}, pages 73--82, 2015.
\newblock \doi{10.1145/2766462.2767733}.

\bibitem[Jones et~al.(1993)Jones, Gomard, and Sestoft]{jones1993partial}
Neil~D. Jones, Carsten~K. Gomard, and Peter Sestoft.
\newblock \emph{Partial Evaluation and Automatic Program Generation}.
\newblock Prentice Hall International Series in Computer Science. Prentice
  Hall, 1993.
\newblock Free PDF: \url{https://www.itu.dk/people/sestoft/pebook/}.

\bibitem[Futamura(1999)]{futamura1999partial}
Yoshihiko Futamura.
\newblock Partial evaluation of computation process---an approach to a
  compiler-compiler.
\newblock \emph{Higher-Order and Symbolic Computation}, 12\penalty0
  (4):\penalty0 381--391, 1999.
\newblock \doi{10.1023/A:1010095604496}.
\newblock Reprint of Futamura 1971; canonical reference for the Futamura
  projections.

\bibitem[W{\"u}rthinger et~al.(2017)W{\"u}rthinger, Wimmer, Humer, W{\"o}{\ss},
  Stadler, Seaton, Duboscq, Simon, and Grimmer]{wuerthinger2017partial}
Thomas W{\"u}rthinger, Christian Wimmer, Christian Humer, Andreas W{\"o}{\ss},
  Lukas Stadler, Chris Seaton, Gilles Duboscq, Doug Simon, and Matthias
  Grimmer.
\newblock Practical partial evaluation for high-performance dynamic language
  runtimes.
\newblock In \emph{Proceedings of the 38th ACM SIGPLAN Conference on
  Programming Language Design and Implementation (PLDI)}, pages 662--676. ACM,
  2017.
\newblock \doi{10.1145/3062341.3062381}.

\bibitem[Singer and Willett(1993)]{singer1993time}
Judith~D. Singer and John~B. Willett.
\newblock It's about time: Using discrete-time survival analysis to study
  duration and the timing of events.
\newblock \emph{Journal of Educational Statistics}, 18\penalty0 (2):\penalty0
  155--195, 1993.
\newblock \doi{10.2307/1165085}.

\bibitem[Lee et~al.(2018)Lee, Zame, Yoon, and van~der Schaar]{lee2018deephit}
Changhee Lee, William~R. Zame, Jinsung Yoon, and Mihaela van~der Schaar.
\newblock {DeepHit}: A deep learning approach to survival analysis with
  competing risks.
\newblock In \emph{Proceedings of the 32nd AAAI Conference on Artificial
  Intelligence}, pages 2314--2321, 2018.
\newblock \doi{10.1609/aaai.v32i1.11842}.

\bibitem[{Expedia}(2013)]{expedia2013kaggle}
{Expedia}.
\newblock Personalize expedia hotel searches --- {ICDM} 2013.
\newblock \url{https://www.kaggle.com/c/expedia-personalized-sort}, 2013.
\newblock Kaggle / ICDM 2013 learning-to-rank competition dataset.

\bibitem[Gelman and Hill(2006)]{gelman2006multilevel}
Andrew Gelman and Jennifer Hill.
\newblock \emph{Data Analysis Using Regression and Multilevel/Hierarchical
  Models}.
\newblock Cambridge University Press, 2006.
\newblock \doi{10.1017/CBO9780511790942}.

\bibitem[Sela and Simonoff(2012)]{sela2012reem}
Rebecca~J. Sela and Jeffrey~S. Simonoff.
\newblock {RE-EM} trees: A data mining approach for longitudinal and clustered
  data.
\newblock \emph{Machine Learning}, 86\penalty0 (2):\penalty0 169--207, 2012.
\newblock \doi{10.1007/s10994-011-5258-3}.

\bibitem[Hajjem et~al.(2014)Hajjem, Bellavance, and Larocque]{hajjem2014merf}
Ahlem Hajjem, Fran{\c{c}}ois Bellavance, and Denis Larocque.
\newblock Mixed-effects random forest for clustered data.
\newblock \emph{Journal of Statistical Computation and Simulation}, 84\penalty0
  (6):\penalty0 1313--1328, 2014.
\newblock \doi{10.1080/00949655.2012.741599}.

\bibitem[Sigrist(2022)]{sigrist2022gpboost}
Fabio Sigrist.
\newblock Gaussian process boosting.
\newblock \emph{Journal of Machine Learning Research}, 23\penalty0
  (232):\penalty0 1--46, 2022.

\bibitem[Makridakis et~al.(2022)Makridakis, Spiliotis, and
  Assimakopoulos]{makridakis2022m5}
Spyros Makridakis, Evangelos Spiliotis, and Vassilios Assimakopoulos.
\newblock {M5} accuracy competition: Results, findings, and conclusions.
\newblock \emph{International Journal of Forecasting}, 38\penalty0
  (4):\penalty0 1346--1364, 2022.
\newblock \doi{10.1016/j.ijforecast.2021.11.013}.

\bibitem[Cho(2024)]{tl2cgen2024}
Hyunsu Cho.
\newblock {TL2cgen}: Model compiler for decision tree ensembles (v1.0.0).
\newblock \url{https://github.com/dmlc/tl2cgen/releases/tag/1.0.0}, 2024.

\bibitem[Cho and Li(2018)]{cho2018treelite}
Hyunsu Cho and Mu~Li.
\newblock Treelite: Toolbox for decision tree deployment.
\newblock In \emph{Proceedings of Machine Learning and Systems (MLSys)}, 2018.
\newblock URL \url{https://mlsys.org/Conferences/doc/2018/196.pdf}.
\newblock Software used at v4.7.0:
  \url{https://github.com/dmlc/treelite/releases/tag/4.7.0}.

\bibitem[Guillame-Bert et~al.(2023)Guillame-Bert, Bruch, Stotz, and
  Pfeifer]{guillame2022tfdf}
Mathieu Guillame-Bert, Sebastian Bruch, Richard Stotz, and Jan Pfeifer.
\newblock Yggdrasil decision forests: A fast and extensible decision forests
  library.
\newblock In \emph{Proceedings of the 29th ACM SIGKDD Conference on Knowledge
  Discovery and Data Mining (KDD)}, pages 4068--4077, 2023.
\newblock \doi{10.1145/3580305.3599933}.

\bibitem[{ONNX}(2026)]{onnxruntime_tree}
{ONNX}.
\newblock {ONNX}: \texttt{ai.onnx.ml.TreeEnsemble} operator (opset v1.21.0).
\newblock
  \url{https://github.com/onnx/onnx/blob/v1.21.0/docs/Operators-ml.md\#aionnxmltreeensemble},
  2026.
\newblock Unifies the deprecated \texttt{TreeEnsembleRegressor} and
  \texttt{TreeEnsembleClassifier} operators.

\bibitem[Lucchese et~al.(2016)Lucchese, Nardini, Orlando, Perego, Tonellotto,
  and Venturini]{lucchese2017vquickscorer}
Claudio Lucchese, Franco~Maria Nardini, Salvatore Orlando, Raffaele Perego,
  Nicola Tonellotto, and Rossano Venturini.
\newblock Exploiting {CPU} {SIMD} extensions to speed-up document scoring with
  tree ensembles.
\newblock In \emph{Proceedings of the 39th International ACM SIGIR Conference
  on Research and Development in Information Retrieval}, pages 833--836, 2016.
\newblock \doi{10.1145/2911451.2914758}.

\bibitem[Ye et~al.(2018)Ye, Zhou, Zou, Gao, and Zhang]{ye2018rapidscorer}
Ting Ye, Hucheng Zhou, Will~Y. Zou, Bin Gao, and Ruofei Zhang.
\newblock {RapidScorer}: Fast tree ensemble evaluation by maximizing
  compactness in data level parallelization.
\newblock In \emph{Proceedings of the 24th ACM SIGKDD International Conference
  on Knowledge Discovery and Data Mining}, pages 941--950, 2018.
\newblock \doi{10.1145/3219819.3219857}.

\bibitem[Manneschi(2025)]{quickscorer_rs}
Leonardo Manneschi.
\newblock quickscorer-rs: {Rust} implementation of {QuickScorer}, 2025.
\newblock URL \url{https://github.com/leomanne/quickscorer-rs}.
\newblock Commit 8314d03; accessed 2026-03-26.

\bibitem[Ke et~al.(2017)Ke, Meng, Finley, Wang, Chen, Ma, Ye, and
  Liu]{ke2017lightgbm}
Guolin Ke, Qi~Meng, Thomas Finley, Taifeng Wang, Wei Chen, Weidong Ma, Qiwei
  Ye, and Tie-Yan Liu.
\newblock {LightGBM}: A highly efficient gradient boosting decision tree.
\newblock In \emph{Advances in Neural Information Processing Systems},
  volume~30, 2017.
\newblock URL
  \url{https://proceedings.neurips.cc/paper/2017/hash/6449f44a102fde848669bdd9eb6b76fa-Abstract.html}.

\bibitem[Knaus et~al.(1995)Knaus, Harrell, Lynn, Goldman, Phillips, Connors,
  Dawson, Fulkerson, Califf, Desbiens, et~al.]{knaus1995support}
William~A. Knaus, Frank~E. Harrell, Joanne Lynn, Lee Goldman, Russell~S.
  Phillips, Alfred~F. Connors, Neal~V. Dawson, William~J. Fulkerson, Robert~M.
  Califf, Norman Desbiens, et~al.
\newblock The {SUPPORT} prognostic model: Objective estimates of survival for
  seriously ill hospitalized adults.
\newblock \emph{Annals of Internal Medicine}, 122\penalty0 (3):\penalty0
  191--203, 1995.
\newblock \doi{10.7326/0003-4819-122-3-199502010-00007}.

\bibitem[Dispenzieri et~al.(2012)Dispenzieri, Katzmann, Kyle, Larson, Therneau,
  Colby, Clark, Mead, Kumar, Melton~III, and Rajkumar]{dispenzieri2012flchain}
Angela Dispenzieri, Jerry~A. Katzmann, Robert~A. Kyle, Dirk~R. Larson, Terry~M.
  Therneau, Colin~L. Colby, Raynell~J. Clark, Graham~P. Mead, Shaji Kumar,
  L.~Joseph Melton~III, and S.~Vincent Rajkumar.
\newblock Use of nonclonal serum immunoglobulin free light chains to predict
  overall survival in the general population.
\newblock \emph{Mayo Clinic Proceedings}, 87\penalty0 (6):\penalty0 517--523,
  2012.
\newblock \doi{10.1016/j.mayocp.2012.03.009}.

\bibitem[Yeh(2009)]{yeh2009credit}
I-Cheng Yeh.
\newblock Default of credit card clients.
\newblock UCI Machine Learning Repository, 2009.
\newblock CC BY 4.0.

\end{thebibliography}

\newpage
\appendix

\section{Experimental Details}\label{sec:app_details}

\subsection{Hardware}\label{sec:app_hardware}

All experiments run on Google Compute Engine VMs in \texttt{us-east4-a},
with SMT disabled (\texttt{threads\_per\_core=1}), the benchmark pinned
to core $0$, and both CPUs at their all-core turbo frequency. Intel's
is set by \texttt{turbo\_mode=ALL\_CORE\_MAX} ($3.1$\,GHz); without it
GCE runs C4 at unrestricted frequency. GCE always runs its Arm
processors at their all-core turbo frequency and offers no setting on
C4A.

\begin{table}[h]
\centering
\caption{Hardware specifications.  Per-core cache sizes; L3 shared.  Both VMs in the same GCE zone, SMT disabled, at the all-core turbo frequency.}\label{tab:hardware_full}
\begin{tabular*}{\textwidth}{@{\extracolsep{\fill}}lll@{}}
\toprule
                    & Intel                            & Arm \\
\midrule
GCE machine type    & c4-standard-32                   & c4a-highmem-16 \\
CPU                 & Xeon Platinum 8581C              & Google Axion (Neoverse V2) \\
Frequency           & 2.3 GHz base, 3.1 GHz all-core turbo & 3.0 GHz \\
Cores               & 16 (SMT disabled)                & 16 \\
L1d / L1i (per core)& 48 KiB / 32 KiB                 & 64 KiB / 64 KiB \\
L2 (per core)       & 2 MiB                            & 2 MiB \\
L3 (shared)         & 260 MiB                          & 80 MiB \\
RAM                 & 120 GB                           & 128 GB \\
OS                  & Ubuntu 26.04.1 LTS               & Ubuntu 26.04.1 LTS \\
Kernel              & 7.0.0-1011-gcp                   & 7.0.0-1011-gcp \\
\bottomrule
\end{tabular*}
\end{table}

\subsection{Software Versions}\label{sec:app_software}

\begin{table}[h]
\centering
\caption{Software versions (identical on both VMs).}\label{tab:software}
\begin{tabular*}{\textwidth}{@{\extracolsep{\fill}}lll@{}}
\toprule
Package     & Version & Role \\
\midrule
Rust        & 1.97.1  & TreeWalker + sweep\_bench \\
GCC         & 15.2.0  & Native libraries, tl2cgen, lleaves linking \\
LightGBM    & 4.7.0   & Model training + native predict \\
XGBoost     & 3.4.1   & Model training + native predict \\
treelite    & 4.7.2   & Model export (binary) + GTIL reference \\
tl2cgen     & 1.0.0   & Compiled predict baseline \\
lleaves     & 1.4.1   & LLVM-compiled predict baseline \\
llvmlite    & 0.50.0  & LLVM 22.1 for lleaves \\
quickscorer-rs & \texttt{8314d03} & QuickScorer baseline \\
\bottomrule
\end{tabular*}
\end{table}

\begin{table}[h]
\centering
\caption{Inference methods compared in \S\ref{sec:evaluation}. TreeWalker (full walk) uses TreeWalker's data layout but evaluates rows independently, isolating partial evaluation from implementation differences. Each baseline with two interfaces is timed through the faster one per cell, chosen by an untimed probe.}\label{tab:methods}
\small
\begin{tabularx}{\textwidth}{@{}l>{\raggedright\arraybackslash}X@{}}
\toprule
Method & Description \\
\midrule
LightGBM native                 & C API: \texttt{PredictForMat} or a loop of \texttt{PredictForMatSingleRowFast}; CMake Release with \texttt{-march=native} \\
XGBoost native                  & C API: one \texttt{PredictFromDense} call or a one-row loop; CMake Release with \texttt{-march=native} \\
tl2cgen (LightGBM+XGBoost)      & Compiled C trees via treelite (GCC \texttt{-O3 -march=native}); its runtime or a loop of the generated \texttt{predict} \\
lleaves (LightGBM)              & LLVM-compiled trees (\texttt{-O3} code generation for the host CPU; trees not inlined into the forest loop) \\
QuickScorer (LightGBM+XGBoost)  & Bitvector-based predictor (\texttt{quickscorer-rs}), on the encoded Expedia variant and the five original-Expedia cells of LightGBM's $T{=}50$, $L{=}2$ model (Appendix~\ref{sec:app_quickscorer}) \\
TreeWalker                      & Partial evaluation (this work) \\
TreeWalker (full walk)          & Row-independent baseline, same data layout \\
TreeWalker ($128$-row chunks)   & TreeWalker on $128$-row chunks of wider groups (Appendix~\ref{sec:app_chunked}) \\

\bottomrule
\end{tabularx}
\end{table}

Table~\ref{tab:software} lists all package versions.  TreeWalker is
compiled with \mbox{\texttt{RUSTFLAGS="-C~target-cpu=native"}}.  tl2cgen
libraries use GCC at \texttt{-O3} with \texttt{-march=native}; the
flags go into tl2cgen's compiler command, which ignores
\texttt{CFLAGS}.  lleaves compiles
through the LLVM that llvmlite bundles, with \texttt{fblocksize=34}
and code generation at \texttt{-O3} for the host CPU. lleaves's own
pipeline also runs LLVM's IR optimizer, which inlines every tree into
the forest loop; that optimizer does not finish on models of $500$
trees or more, so for every model we compile the trees in four
separate modules without it, and each tree remains a function call per
row.  LightGBM and XGBoost
native libraries are built from pinned sources with CMake Release and
\texttt{-march=native}, and predict on one thread.  Every method runs
on one thread, receives f64 row-major input, and converts it inside its
timed call.

\subsection{Numerical Precision}\label{sec:app_precision}

Every precision measurement in this section ran on the benchmark machine
types (Table~\ref{tab:hardware_full}). TreeWalker's and native XGBoost's
values were computed on each machine; the references (GTIL's outputs
and the per-row stage oracle) were computed once, on the Intel machine
that prepared the cells. TreeWalker adds each row's
leaf values in f64, in its loaded tree order, and applies the model's
base score once after the sum.

For f64 models (LightGBM), TreeWalker is bit-exact against treelite's
General Tree Inference Library (GTIL) when tree ordering is disabled:
$0$~ULP on all $30.2$M rows of the $356$ LightGBM cells, on both
machines. Tree ordering, on by default, permutes the trees and so the
order of the additions. It moves raw margins by at most
$1.9 \times 10^{-13}$ on Expedia and on survival panels of up to $256$
rows, $2.0 \times 10^{-10}$ on FLCHAIN's $512$-row panels and
$6.3 \times 10^{-9}$ on its $1{,}008$-row panels (at $T{=}2000$,
$L{=}16$); outputs move by at most $7.4 \times 10^{-15}$,
$9.3 \times 10^{-15}$ and $2.1 \times 10^{-13}$ respectively. The shifts are identical on Intel
and Arm, as f64 additions in a fixed order must be.

\paragraph{f32 accumulation roundoff.}
For f32 models (XGBoost), TreeWalker compares inputs and thresholds in
f32, so it takes the native split decisions, and accumulates the f32
leaf values in f64 rather than f32. Against an exactly rounded f64
reference (the correctly rounded sum of GTIL's per-tree outputs, with
the same base score), TreeWalker's raw margins are closer than native
XGBoost's on all $117{,}096$ audited rows of the $292$ XGBoost models (one cell each),
with no ties, on both machines; TreeWalker differs from the reference
on $7$ rows, by at most $8.9 \times 10^{-16}$. Native XGBoost differs on
every row, by at most $3.95 \times 10^{-5}$, and equals the reference
rounded to f32 on $7.8\%$ of them. The improvement is a roundoff
property of f64 versus f32 accumulation, not a model-quality change.

\subsection{Blocked Paired Measurement Protocol}\label{sec:app_methodology}

All methods run inside a single Rust binary via C~FFI, pinned to core
$0$ (\texttt{taskset -c 0}), timed with the same tick counter
(\texttt{rdtsc} at $2.3$\,GHz on Intel, \texttt{cntvct\_el0} at
$1$\,GHz on Arm), with no Python in the measurement path. Each cell runs in a fresh process, so
no method inherits a heap that earlier models churned. Per cell:

\begin{description}
\item[Validation.]
Before timing, every method's outputs on the timed groups are checked
against TreeWalker's production predict, within the model's rounding
bound (at least $10^{-13}$, or $10^{-5}$ for a method whose output
passes through f32); TreeWalker also matches its own full walk bit for
bit and agrees within $10^{-12}$ (relative) with a correctly rounded
per-row reference (the stage oracle) on a subset of each cell's groups. tl2cgen's known defect is checked apart
(Appendix~\ref{sec:app_missing}). A method that fails is not timed.

\item[Interface probe.]
Each baseline with two interfaces is timed through the faster one
(Table~\ref{tab:methods}), chosen by an untimed run on the warm-up
batch. One-row groups use the single-row interface.

\item[Sampling.]
A cell whose groups exceed $65{,}536$ rows per round times a seeded
sample of them, the same for every method, stratified by group size
and giving every group the same inclusion probability, with at least
$200$ groups. The SUPPORT $104$-row cells time $630$--$631$ of $1{,}774$
patients, Expedia's sessions $2{,}641$--$2{,}642$ of $10{,}000$, and FLCHAIN's
$1{,}008$-row cells $200$ of $1{,}304$. Medians are unweighted, and
inference is for the cell's pool.

\item[Rounds and blocks.]
The timed groups are split into $12$ batches. A round gives each
method its own phase: a warm-up on the batch it times last, then a
timed pass over every batch, with the same groups in the same order
for every method. Method order alternates across rounds, forward then
reversed. A \emph{block} is one batch in one round; its value is the
median time per group in it.

\item[Estimand.]
A method's latency is the median over its blocks, each weighted
equally. A speedup is the ratio of two latencies. Its $95\%$ interval
comes from $1{,}000$ bootstrap resamples that draw groups and rounds as
crossed clusters, pair the two methods, and recompute the medians.
p5--p95 over blocks describes dispersion and is not a confidence
interval.

\item[Stopping and re-timing.]
A cell times at least four rounds, two forward-and-reversed cycles, and
stops when every method's per-round mean time per row has a relative
standard error below $1\%$, or at $10$ rounds or its time budget. A cell
and mode whose speedup or ablation-effect interval is wider than
$\pm 2\%$ is timed once more with twice the rounds, on the same VM; the
longer attempt is reported, and both stay in the data.

\item[XGBoost processes.]
On Intel, XGBoost's batch walk runs at one of two speeds from process
to process, so each XGBoost cell also times XGBoost alone
in six extra one-round processes. The reported number is the first
process, timed and paired like every method; the faster mode across
processes is a sensitivity check, against which TreeWalker is
$5.3$--$9.0\times$ faster at the reference (row-weighted), and the
extra processes validate their first $32$ groups.

\item[Sentinel.]
A fixed cell (SUPPORT, LightGBM, $T{=}500$, $L{=}4$, $G{=}16$) runs at the start
of each run and after every $25$ cells, and a method whose latency
drifts beyond $2\%$ from the first measurement is flagged.
\end{description}

\paragraph{Cross-architecture reproducibility.}
Models are trained once on the Intel machine (seed 42) and the full
artifact set (native models, treelite exports, test data, group offsets)
is shared with the Arm machine, which checks every file's hash.
Compiled baselines and \texttt{sweep\_bench} are built per architecture
from the shared model files. Work counters are identical on both
machines.

\subsection{Coverage}\label{sec:app_missing}

All $712$ cells completed on both architectures, every required method
validated, and no cell failed. tl2cgen reads each feature through a
union of an integer and a double and tests the integer for its
missing-value marker, so in an f64 model a finite value whose low $32$
bits are all ones is read as missing. Such values occur in $68$ of the
$356$ LightGBM cells; their $2{,}673$ affected rows are checked apart,
$2{,}579$ of them differ, by at most $0.88$, and tl2cgen stays timed as
shipped. QuickScorer has no categorical splits or
missing-value routing and its implementation limits trees to $128$
leaves, so it is excluded up front from every LightGBM model with a
categorical split, every cell with missing values a model may route
right, and every model with a larger tree. It validates on the five
original-Expedia cells of LightGBM's $T{=}50$, $L{=}2$ model and on
$11$ of the $16$ cells of the encoded Expedia variant; the other $5$
fail validation and are not timed (Appendix~\ref{sec:app_quickscorer}).

\subsection{Measurement Variability}\label{sec:app_variability}

Of the (cell, mode) pairs, $97\%$ on Intel and $96\%$ on Arm stopped at
the $1\%$ precision target and the rest at the round cap or, for two
Arm pairs, the time budget. The $95\%$ interval's half-width is
$0.87\%$ of the speedup at the median (cell, method) pair on Intel
($0.98\%$ on Arm) and about $2\%$ at the $95$th percentile. The
re-timing rule re-timed $155$ (cell, mode) pairs on Intel and $164$ on
Arm, and every attempt ran on its parent's VM. In the grid $94$
(Intel) and $71$ (Arm) pairs stay wider than $\pm 2\%$ after their
re-time; in all but $2$ ($6$) of them the widest interval is a
baseline's speedup, most often tl2cgen's, which reaches $\pm 38\%$ on FLCHAIN's
$1{,}008$-row panels on Intel ($\pm 22\%$ at most on Arm). No method's sentinel latency drifted by
more than $0.6\%$. The two ablation runs agree within $0.9$ percentage
points on Intel and $2.5$ on Arm. As a sensitivity check, the
row-weighted speedup (total time over total time) is within $0.5\%$ of
the latency ratio at the median panel or cohort cell, and $4$--$6\%$
lower on Expedia sessions. Figure~\ref{fig:method_comparison} shows
p5--p95 over blocks as error bars.

\section{Implementation Details}\label{sec:app_implementation}\label{sec:app_layout}

\subsection{Data Layout}

The algorithm's constant factors depend on the node representation and
memory layout. TreeWalker uses a compact 16-byte node
(Table~\ref{tab:node}). The \texttt{value} field stores the split
threshold, an inline categorical bitset or an offset into the bitset
pool, or the leaf output, depending on node type. The varying-feature type (constant, monotonic, or
non-monotonic) is embedded in the \texttt{flags} byte at parse time,
avoiding per-node lookups during the hot loop. A row mask is one
\texttt{u16}, \texttt{u32} or \texttt{u64} word for groups of up to
$64$ rows and an array of 64-bit words above that.

\begin{table}[!htb]
\centering
\caption{Node layout (16 bytes).}\label{tab:node}
\begin{tabularx}{\textwidth}{@{}llX@{}}
\toprule
Field & Type & Description \\
\midrule
value       & f64  & Threshold, inline category bitset or bitset-pool offset, or leaf output \\
skip        & i16  & Tree-local index of the light child ($-1$ = leaf) \\
varying\_pred\_id & u16  & Index into varying-predicate array \\
feature     & u16  & Feature index \\
flags       & u8   & Bitfield: default\_left, categorical, inline\_cat, heavy\_is\_left, walkable, vary\_type (2 bits: constant, increasing, decreasing, non-monotonic) \\
cat\_n\_words & u8   & Bitset word count (categorical splits) \\
\bottomrule
\end{tabularx}
\end{table}

All trees share a single contiguous node array, reordered at parse time
via heavy-path DFS: the child with more training weight (LightGBM's
sample count, XGBoost's hessian sum; the left child on a tie) occupies
position \texttt{idx\,+\,1} in memory, making the constant-walk
fall-through path a sequential memory scan.

\subsection{Engineered Variant of Algorithm~\ref{alg:partial_eval}}

The conceptual algorithm in \S\ref{sec:partialeval} treats both
children of every varying split as recursive calls. The implementation
augments this with four optimizations (named in \S\ref{sec:tuning});
the engineered variant is shown in
Algorithm~\ref{alg:partial_eval_impl}.

\begin{algorithm}[!htb]
\caption{Engineered variant of Algorithm~\ref{alg:partial_eval}: adds
the unsplit shortcut, heavy-path DFS recursion, and tail-call
elimination. Constant walk on lines 9--12; leaf
accumulation on lines 3--7; varying partition on lines 15--22.}\label{alg:partial_eval_impl}
\begin{algorithmic}[1]
\Procedure{PartialEvalImpl}{tree, node, mask}
\Loop \Comment{constant walk + tail-loop site}
    \If{node is leaf}
        \ForAll{bit $r$ set in mask}
            \State $\text{results}[r] \gets \text{results}[r] + \text{node.value}$
        \EndFor
        \State \Return
    \EndIf
    \If{node.feature is constant}
        \State node $\gets$ child matching constant value
        \State \textbf{continue}
    \EndIf
    \State \textbf{break} \Comment{hit a varying split}
\EndLoop
\State $(m_L, m_R) \gets \Call{Partition}{\text{node, mask}}$ \Comment{may use precomputed mask}
\If{$m_L = 0$} node $\gets$ right child; mask $\gets m_R$; \textbf{loop} \Comment{unsplit shortcut}
\ElsIf{$m_R = 0$} node $\gets$ left child; mask $\gets m_L$; \textbf{loop}
\Else
    \State $(\text{light}, \text{heavy}) \gets$ structurally light / heavy children \Comment{heavy-path DFS}
    \State \Call{PartialEvalImpl}{tree, light.node, light.mask}
    \State node $\gets$ heavy.node; mask $\gets$ heavy.mask; \textbf{loop} \Comment{tail-call}
\EndIf
\EndProcedure
\end{algorithmic}
\end{algorithm}

\paragraph{Unsplit shortcut.}
When a varying-split partition leaves one side empty, the shortcut
loops on the non-empty side without making a recursive call. This is
the mechanism behind Lemma~\ref{lem:unsplit}: TreeWalker makes exactly
$B(\tau, X)$ non-tail recursive calls under the unsplit-enabled
evaluator, eliminating the $U = V - B$ would-be empty descents.

\paragraph{Heavy-path DFS structural recursion.}
At parse time each split's structurally heavy child (the one with more
training weight) is placed at \texttt{idx + 1} in the contiguous node
array; the light child is reached via the \texttt{skip} pointer. The
recursion descends into the light child and tail-loops on the heavy
child, so the tail-loop side stays sequential in cache. The corresponding stack-depth bound is
$O(L)$ (Lemma~\ref{lem:stack}); we make no $O(\log_2 G)$ claim.

\paragraph{Monotonic early-termination.}
When a varying split uses a feature monotone along the row sequence
($r=0,\ldots,G-1$), the partition stops comparing thresholds at the
cutoff, since the rows before and after it go opposite ways; missing
values are still checked and follow the default direction.
It applies to the trace evaluator only, since the precompute evaluates
every predicate the same way. It is a constant-factor reduction in
$Q_{\mathrm{trace}}$ whose wall-clock effect changes sign across the
ablation anchors, $-3\%$ to $+7\%$ on Intel ($-4\%$ to $+7\%$ on Arm; \S\ref{sec:ablation}).

\paragraph{Predicate precomputation.}
At each varying split TreeWalker must evaluate the split predicate for
every active row. Without precomputation this costs $O(P \cdot G)$
per group, where $P$ is the total varying-predicate count across all
trees. We reduce this to the cost in Eq.~\eqref{eq:Q-pre},
$O(F_v \cdot G \log G + |\Pi^{\mathrm{num}}| + G \cdot |\Pi^{\mathrm{cat}}|)$,
by precomputing every varying predicate before traversal: at parse time, extract the
unique predicates per varying feature; at predict time, sort the $G$
row values once per feature and sweep a two-pointer merge against the
sorted thresholds, emitting one bitmask per predicate in amortized
$O(1)$. Tree traversal then looks up precomputed masks by predicate
index. This strategy is the $Q_{\mathrm{pre}}$ instantiation analyzed
in Corollary~\ref{cor:impl}.

\paragraph{Production kernel.}
Production predict inlines the first walk of each tree into the tree
loop; with prefix grouping, on by default with depth $2$, that walk
starts where the tree's group of shared prefixes ends. When the
constant walk reaches a leaf before any varying split, as in every tree
of the measured $G{=}1$ cells, whose models split on no varying
feature, the leaf's value is added to every row directly, with no
recursive call. In models of $4{,}096$ nodes ($64$\,KiB) or more, at
least the L1 data cache of either machine, the loop prefetches each
next tree's start node. Neither changes an output or a work
counter.

\section{Proof of Theorem~\ref{thm:decomp}}\label{sec:app_proof}

We prove four supporting lemmas and then close out the bound stated in
\S\ref{sec:theorem}. The setup, definitions
($A(u), C, V, T_c, Q$), cost model, and provisioning step are as in
\S\ref{sec:theorem}. We additionally use, only in Lemma~\ref{lem:unsplit},
the partition counts
\[
  B(\tau, X) = \bigl|\{u : u \text{ varying}, |A_L(u)|\!>\!0 \land |A_R(u)|\!>\!0\}\bigr|,
  \qquad
  U(\tau, X) = V(\tau, X) - B(\tau, X).
\]

\begin{lemma}[Active-mask disjointness]\label{lem:disjoint}
Under the default unsplit-enabled evaluator, for every node $u \in \tau$
with $A(u) \neq \emptyset$, TreeWalker visits $u$ exactly once during
the partial evaluation of $\tau$ on $X$.
\end{lemma}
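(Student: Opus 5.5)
We prove Lemma~\ref{lem:disjoint} by structural induction on the depth of $u$ in $\tau$, tracking the invocations of Algorithm~\ref{alg:partial_eval_impl} (including tail-loop iterations, which we treat as "visits" in the same way as recursive calls). The key invariant is that $A(u)$ is well defined as a \emph{single} mask: we show simultaneously that (i) every node $u$ with $A(u)\neq\emptyset$ is reached by exactly one control-flow arrival (a recursive call, a tail-loop step, a constant-walk step, or the initial call at the root), and (ii) the mask carried on that arrival equals $A(u)$. Nodes with $A(u)=\emptyset$ are never reached, because of the unsplit shortcut.

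The base case is the root: the engine enters it once, carrying the full mask $[G]$, and nothing else ever transfers control back to the root, since every transition moves strictly to a child. For the inductive step, let $u$ be a non-root node with parent $p$. Because $\tau$ is a tree, every arrival at $u$ comes from processing $p$. By the inductive hypothesis, $p$ is processed exactly once if $A(p)\neq\emptyset$ and never otherwise. It therefore suffices to check that a single processing of $p$ arrives at $u$ at most once, and does so precisely when $A(u)\neq\emptyset$. We split into two cases by the type of $p$.

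If $p$ is a constant split, lines 9--12 advance to exactly one child, the one selected by the constant value, and carry the mask unchanged. So $u$ is reached once, with $A(u)=A(p)$, when $u$ is the selected child. Otherwise $u\notin T_c$ and $A(u)=\emptyset$, and $u$ is never reached.

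If $p$ is a varying split, \textsc{Partition} returns $(m_L,m_R)$, a partition of $A(p)$ into the rows satisfying and failing the predicate, so $m_L\cap m_R=\emptyset$. Lines 16--22 act on the three possible outcomes. If $m_L=0$, the engine tail-loops once to the right child and never touches the left child. If $m_R=0$, the symmetric statement holds. If both are nonempty, it makes one recursive call on the light child and one tail-loop step on the heavy child. In every case each child receives at most one transfer, carrying exactly its side of the partition, and a child receives a transfer iff its side is nonempty.

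Leaves are handled in the same way as any other $u$ and need no separate argument; leaf accumulation is only the terminal action of that single visit.

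The main obstacle is making "visit" precise in the presence of tail-call elimination and the \textbf{continue}/\textbf{loop} jumps, so that a loop iteration is not double-counted with a call, and so that no node can be re-entered after a recursive call returns. We would handle this by defining the execution trace as a sequence of (node, mask) states. Each transition in this trace goes from a node to one of its children, and after a return, control resumes only at the tail-loop step of the same parent invocation, which targets the \emph{other} child. Disjointness of $m_L$ and $m_R$ then gives, as a corollary, that each row's active path is unique. This yields $\sum_{\text{leaves}}|A(\ell)|=G$, the fact that the leaf-write term of Theorem~\ref{thm:decomp} relies on.
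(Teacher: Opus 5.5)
Your proposal is correct and follows essentially the same route as the paper: induction on depth from the root, with constant splits forwarding the mask to the single selected child, and varying splits partitioning into disjoint masks where the unsplit shortcut prevents any empty-mask descent. Your additions make explicit what the paper leaves implicit. These are the strengthened hypothesis that nodes with $A(u)=\emptyset$ are never reached, and the execution-trace account of tail loops, which shows no node is re-entered after a recursive call returns.
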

\begin{proof}
By induction on the depth of $u$ in $\tau$. The root has
$A(\mathrm{root}) = [G]$ and is visited once at the call entry. At a
constant split $u$ with $A(u) \neq \emptyset$, the algorithm forwards
$A(u)$ unchanged to the deterministic child (the other child is
excluded from $T_c$). At a varying split $u$ with $A(u) \neq \emptyset$,
the algorithm partitions $A(u)$ into disjoint masks
$(A_L(u), A_R(u))$ with $|A_L| + |A_R| = |A(u)|$, and \emph{continues
once on each non-empty side}: one recursive call to one child plus a
tail-loop to the other if both are non-empty (Lemma~\ref{lem:unsplit}
for the unsplit-enabled call-count rule). The unsplit shortcut ensures
that empty-mask sub-calls are not made: a child with empty mask is
skipped, so no empty-mask visit occurs. By induction, every node
reached with non-empty active mask is visited exactly once.
\end{proof}

\noindent
\emph{Remark (ablation).} The unsplit-disabled ablation
(\texttt{disable\_unsplit=true}) makes a recursive call regardless of
whether the child mask is empty, so empty-mask entries are possible.
The theorem and lemmas in this section are stated for the default
evaluator; ablation-path counters lie above these bounds, not below.

\begin{lemma}[Visited-set bound]\label{lem:visit-bound}
For every group $X$, $C(\tau, X) + V(\tau, X) \le |T_c(\tau, X)|.$
\end{lemma}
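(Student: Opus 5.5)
The plan is to exhibit an injection from the set counted by $C(\tau,X)+V(\tau,X)$ into the node set of $T_c(\tau,X)$. Since $C$ and $V$ count disjoint sets of nodes (a split is either constant or varying, never both), their sum is the size of
\[
  S(\tau,X) = \{u \in \tau : u \text{ a split node}, A(u) \neq \emptyset\}.
\]
It therefore suffices to show $S(\tau,X) \subseteq T_c(\tau,X)$. The inclusion is then the injection, and $|S| \le |T_c|$ follows immediately.

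The inclusion I would prove by induction on the depth of $u$, using the definition of $A(u)$ as the rows that reach $u$ under partial evaluation.
\begin{enumerate}
\item The root is in $T_c$ by construction, since $T_c$ is obtained from $\tau$ by deleting children, never the root.
\item For a non-root node $u$ with parent $p$, a row can only reach $u$ by reaching $p$ first, so $A(u) \neq \emptyset$ implies $A(p) \supseteq A(u) \neq \emptyset$. By the inductive hypothesis $p \in T_c$.
\item If $p$ is a constant split, every row of $A(p)$ shares the constant value. The evaluator forwards $A(p)$ only to the selected child, so the unselected child has empty active set. Hence $u$ must be the selected child, which $T_c$ retains.
\item If $p$ is a varying split, $T_c$ retains both children, so $u \in T_c$ automatically.
\end{enumerate}
This is essentially the remark already made after the definition of constant projection ("every node visited under partial evaluation lies in $T_c$"), made precise. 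Lemma~\ref{lem:disjoint} ensures each such node is counted once, so counting visits matches counting nodes.

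I expect the only delicate point to be the interpretation of $|T_c|$ and of $A(\cdot)$ at deleted children. Two issues need care.
\begin{itemize}
\item $T_c$ is a "directed subgraph" of $\tau$ obtained by deleting a child. That child's whole subtree must be understood as unreachable, and it should not remain as a disconnected component of $T_c$ that would still be counted. I would fix the convention that $T_c$ is the component reachable from the root.
\item With that convention, $|T_c|$ counts leaves as well as splits. Those extra nodes only make the inequality looser, so they cause no difficulty.
\end{itemize}

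I would also check that the unsplit shortcut does not break the induction. It only suppresses empty-mask visits, so it can shrink $S$ but never add nodes outside $T_c$. The bound thus holds for both the default evaluator and, at the level of nonempty active sets, the ablation path.
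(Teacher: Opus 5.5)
Your proposal is correct and follows the paper's argument: the paper also shows that the split nodes with non-empty active mask lie in $T_c(\tau,X)$, asserting this ``by construction'' where your induction on depth spells it out, and it likewise uses Lemma~\ref{lem:disjoint} only to identify visits with nodes. The paper further notes that these nodes are internal nodes of $T_c$, which gives the slightly sharper $C+V \le |T_c| - L_c$; this is the same slack you attribute to the leaves of $T_c$.
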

\begin{proof}
Every split node visited by TreeWalker (with non-empty active mask) lies
in $T_c(\tau, X)$ by construction, since the constant projection retains
every branch a non-empty mask can reach. By
Lemma~\ref{lem:disjoint} each such visit is counted once, so $C + V$
is the count of internal split nodes of $T_c$ that have non-empty
masks under the partial evaluation. The internal split nodes of $T_c$
number $|T_c| - L_c(\tau, X)$, where $L_c$ is the count of $T_c$
leaves; the visited-with-non-empty-mask set is a subset of those
internal nodes, hence $C + V \le |T_c| - L_c \le |T_c|$.
\end{proof}

\noindent
\emph{Remark (tightness).} The bound is conservative. $T_c$ retains
both subtrees below every varying split, even when the partition sends
all active rows to one side. The looseness can be
$\Theta(|T_c|/L)$ in adversarial cases (e.g.\ a varying split at the
root with all rows going one way, leaving a large unvisited subtree on
the other side). The conservative form suffices for
Theorem~\ref{thm:decomp}; tighter bounds follow from $C(\tau, X) +
V(\tau, X)$ themselves.

\begin{lemma}[Unsplit call-count]\label{lem:unsplit}
Under the default unsplit-enabled evaluator, TreeWalker's partial
evaluation of $\tau$ on $X$ makes exactly $B(\tau, X)$ non-tail
recursive calls. The unsplit shortcut eliminates $U(\tau, X) = V(\tau,
X) - B(\tau, X)$ continuations into empty subtrees.
\end{lemma}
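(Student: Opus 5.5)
The plan is to analyze the control flow of Algorithm~\ref{alg:partial_eval_impl} directly and to charge each non-tail recursive call to a unique varying split. The algorithm contains exactly one non-tail call site: the \textsc{PartialEvalImpl} call on the light child inside the final \textbf{else} branch. Every other transfer of control is a \textbf{continue} or tail-\textbf{loop}. These are the constant walk, the two unsplit-shortcut branches, and the heavy-child tail loop. So the count of non-tail calls equals the number of times execution enters that \textbf{else} branch. The branch is entered exactly when a visited varying split $u$ has $m_L \ne 0$ and $m_R \ne 0$. Since $(m_L,m_R) = (A_L(u),A_R(u))$, this means $|A_L(u)|>0$ and $|A_R(u)|>0$.

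The key steps, in order, are as follows.

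\begin{enumerate}
\item Establish the invariant that whenever the body processes a node $u$, the current mask is exactly $A(u)$ and is non-empty. This holds at entry (the root mask is $[G]$, assuming $G\ge 1$). Constant steps forward the mask unchanged. The shortcut branches move to the unique non-empty side. The light call and the heavy tail loop each carry a non-empty part of the partition.
\item Invoke Lemma~\ref{lem:disjoint}: each node with $A(u)\ne\emptyset$ is visited exactly once. Hence each varying split in $V$ executes the partition-and-branch code exactly once, and no varying split outside $V$ executes it at all.
\item Classify each such visit. If both sides are non-empty, $u$ is counted in $B$ and produces exactly one non-tail call. Otherwise exactly one side is empty, since the partition is disjoint with $|A_L|+|A_R|=|A(u)|>0$. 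In that case $u$ is counted in $V\setminus B$, produces zero calls, and takes the shortcut.
\item Sum over $V$ to obtain exactly $B$ non-tail calls.
\item For the second sentence, compare against the unsplit-disabled evaluator. It continues into both children at every varying split. At each of the $V-B=U$ one-sided splits it would therefore make one continuation into a child with empty mask, and the shortcut removes exactly these. Under the paper's definitions $A(u)$ is the same for both evaluators, so the counts are per node and there is one empty side per unsplit node. Empty descents below those empty children are not counted here, which is consistent with the ablation statement ``at least $U$''.
\end{enumerate}

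The main obstacle is step 1 together with the reliance on Lemma~\ref{lem:disjoint}. That lemma's proof cites the present lemma for the call-count rule, so I must avoid circularity. To do so I will prove the invariant and the visit-once property directly by induction on depth, using only the control flow. The argument is that each node has a unique parent, and the parent hands its child control exactly once, via either a call or a tail loop. I then use Lemma~\ref{lem:disjoint} only as a restatement of that fact. A minor point to handle explicitly is the heavy/light selection: it is irrelevant to the count, since exactly one of the two non-empty sides is called and the other is tail-looped regardless of which is heavy.
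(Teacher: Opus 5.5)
Your proposal is correct and follows the paper's proof. Both do a case split at each visited varying split (both sides non-empty gives one recursive call plus a tail loop; one side non-empty takes the shortcut with no call), summed over the $V$ visited varying splits to get $B$ calls and $U = V - B$ eliminated empty-side continuations; your explicit mask invariant and direct visit-once induction usefully tighten a step the paper leaves implicit, since Lemma~\ref{lem:disjoint}'s proof itself cites this lemma.
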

\begin{proof}
At a varying split $u$ with $A(u) \ne \emptyset$:
(i) if both children have non-empty masks ($u$ counted in $B$), the
algorithm recurses once on one child and tail-loops on the other,
producing exactly one non-tail recursive call;
(ii) if exactly one child has a non-empty mask ($u$ counted in $U$),
the unsplit shortcut tail-loops on the non-empty child and makes no
recursive call. The total non-tail call count is therefore $B(\tau,
X)$, and all $U$ would-be empty-side descents are eliminated.
Lemma~\ref{lem:unsplit} bounds call count, not row work; it is not
used in the proof of Theorem~\ref{thm:decomp} and is recorded here for
completeness (the unsplit ablation adds at least $U$ calls, \S\ref{sec:ablation}).
\end{proof}

\begin{lemma}[Stack depth]\label{lem:stack}
Stack depth during partial evaluation of $\tau$ is at most~$L$ (the
maximum tree depth).
\end{lemma}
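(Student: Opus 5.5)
The plan is to bound the call stack by charging each live frame of \textsc{PartialEvalImpl} (Algorithm~\ref{alg:partial_eval_impl}) to a distinct split node on one root-to-leaf path of $\tau$, so the stack height is at most the number of split nodes on that path, which is at most $L$. First fix the stack discipline. By Lemma~\ref{lem:unsplit}, the only frame-creating event is a non-tail call on the light child at a varying split $u$ counted in $B(\tau, X)$. Constant-walk steps, unsplit-shortcut continuations and heavy-side tail-loops all reuse the current frame. Hence, at any moment, the live frames $F_0, F_1, \dots, F_k$ (with $F_0$ the root call) are linked by a chain of split nodes $u_1, \dots, u_k$. Here $F_i$ was spawned at $u_i$ by frame $F_{i-1}$, and $F_{i-1}$ stays suspended at $u_i$ until $F_i$ returns.

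The key invariant is that $u_1, u_2, \dots, u_k$ lie on a single root-to-leaf path, and each $u_{i+1}$ lies strictly inside the subtree rooted at the light child of $u_i$. I would prove this by induction on $i$. Frame $F_i$ starts at the light child of $u_i$. From there it only moves downward, through constant-walk steps, unsplit shortcuts and heavy tail-loops, each of which replaces the node by a child. So when $F_i$ later spawns $F_{i+1}$ at $u_{i+1}$, that node is a descendant of the light child of $u_i$, and hence a strict descendant of $u_i$. For $F_0$, the same argument gives that $u_1$ is a descendant of the root.

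The counting then follows. The $u_i$ are $k$ distinct split nodes on one root-to-leaf path, so $k \le L$. The stack therefore holds $k+1$ frames, which is $k \le L$ nested calls beyond the entry. A frame currently at a leaf performs leaf accumulation and returns without spawning, so it adds nothing below $u_k$; the same holds for frames still in the constant walk.

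The main obstacle is the off-by-one convention: whether ``stack depth'' counts the entry frame and how $L$ counts depth (edges versus split nodes). I would pin this down by measuring depth as the number of split nodes on a root-to-leaf path, which matches $d_\tau(x)$ in the path-statistics definition. I would then observe that the frame at $u_k$'s light-child subtree needs no further split of its own to spawn anything, so the nested recursive calls number at most $L$. The point to stress is that the tail-call structure is what makes this hold: without it, the naive bound would be one frame per visited node along a path. No $\log_2 G$ refinement is attempted, consistent with the remark on heavy-path DFS.
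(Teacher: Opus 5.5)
Your proof is correct and follows the paper's argument. The paper's proof notes that each recursive call descends to a child, that tail-looped continuations reuse the frame, and that the recursion depth is therefore bounded by the height $L$; you make this explicit by chaining the live frames through strictly nested spawn nodes $u_1,\dots,u_k$ on one root-to-leaf path.

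One correction to your closing remark: the tail loop is not what makes the $L$ bound hold. Even a fully recursive walk, such as Algorithm~\ref{alg:partial_eval} recursing on both children or one frame per node on a path, gives at most $L$ nested calls. Tail-looping only restricts where frames can be spawned, namely to the splits counted in $B(\tau, X)$.
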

\begin{proof}
Each recursive call descends to a child of the current node. The
non-recursive (tail-looped) side does not increase stack depth. Since
the height of $\tau$ is at most $L$, the depth of the recursion is at
most $L$.
\end{proof}

\noindent
\emph{Remark (heavy-path DFS).} A tighter $O(\log_2 G)$ stack-depth
bound would require the recursive call to always descend to the side
with smaller active popcount. TreeWalker instead recurses on the
\emph{structurally light} child (under heavy-path DFS, fixed at parse
time) so that the tail-loop side stays contiguous in cache. We make no
$O(\log_2 G)$ claim.

\paragraph{Proof of Theorem~\ref{thm:decomp}.}
Every visited split node contributes one unit of control work (one
visit, by Lemma~\ref{lem:disjoint}), giving $C(\tau, X) + V(\tau, X)$
in total. Every row reaches exactly one leaf per tree (the active-mask
partition is exact: $\sum_{\ell \in \mathrm{leaves}(\tau)} |A(\ell)| =
G$), contributing exactly $G$ leaf writes. Provisioning queries at
varying splits aggregate to $Q(\tau, X)$. Equality at unit cost
follows. The structural inequality $C + V \le |T_c|$ is
Lemma~\ref{lem:visit-bound}. \qed

\noindent
\emph{Remark.} Theorem~\ref{thm:decomp} is stated for the per-tree
evaluator. The default forest path includes a prefix-grouping pass
that begins per-tree evaluation at a shared continuation node rather
than the root. Prefix grouping only lowers the structural control term
at ensemble level by sharing visits across trees; the per-tree theorem
remains an upper bound on each summand of the ensemble cost.

\paragraph{Proof of Corollary~\ref{cor:trace}.}
Counting provisioning queries: each row visits each varying split on
its path exactly once. Summing per-row contributions gives
$Q_{\mathrm{trace}} = \sum_x d_{v,\tau}(x) = G \cdot d_v$ (an exact
identity, not an upper bound). Substituting Theorem~\ref{thm:decomp}
with $Q_{\mathrm{trace}} = G \cdot d_v$ and dividing by $G(d + 1)$
yields~\eqref{eq:trace-ratio}, since $|T_c| + G + G d_v = |T_c| +
G(d_v + 1)$. The asymptote follows from $|T_c|/(G(d+1)) \to 0$ as
$G \to \infty$ under the boundedness assumption; the leaf-inclusive
denominator avoids a spurious $1/d$ term. \qed

\paragraph{Proof of Corollary~\ref{cor:impl}.}
Per varying feature: sorting the $G$ group row values is $O(G \log G)$.
Numeric predicates use a two-pointer merge in $O(|\Pi_f^{\mathrm{num}}|)$
amortized over the sort. Categorical predicates evaluate per row in
$O(G \cdot |\Pi_f^{\mathrm{cat}}|)$. Summing over $f \in F_v$ gives
Eq.~\eqref{eq:Q-pre}. Each tree's per-visit provisioning access is
$O(1)$ (bitmask lookup), so the per-tree control term in
Theorem~\ref{thm:decomp} reduces to $C(\tau_i, X) + V(\tau_i, X) + G$.
Summing over trees and adding the shared provisioning term
$Q_{\mathrm{pre}}$ yields the bound. \qed

\paragraph{Empirical validation by counter.}
The decomposition~\eqref{eq:decomp} is an identity on every group, not
a predicted-vs-observed ratio. We validate componentwise against
TreeWalker's emitted counters:

\begin{table}[h]
\centering
\small\begin{tabular*}{\textwidth}{@{\extracolsep{\fill}}lll@{}}
\toprule
\textbf{Term} & \textbf{Counter (proxy)} & \textbf{Source} \\
\midrule
$C(\tau, X)$ & \texttt{constant\_steps} & per-tree stats (exact) \\
$V(\tau, X)$ & \texttt{varying\_splits} & per-tree stats (exact) \\
$G$ & group size & known (exact) \\
$Q_{\mathrm{pre}}$ & \texttt{precompute\_row\_evals} $+$ \texttt{partition\_row\_evals} & per-group stats (proxy) \\
$|T_c(\tau, X)|$ & offline replay against $(\tau, X)$ & one-time post-process \\
\bottomrule
\end{tabular*}
\caption{Trace counters used to validate Theorem~\ref{thm:decomp}
componentwise. $C$, $V$, $G$ are tracked exactly; $Q_{\mathrm{pre}}$
is a row-evaluation proxy that misses the $G\log G$ sort term and the
predicate-set-size terms in Eq.~\eqref{eq:Q-pre}.}
\label{tab:decomp-counters}
\end{table}

\section{QuickScorer Head-to-Head}\label{sec:app_quickscorer}

QuickScorer~\citep{lucchese2015quickscorer} represents each tree as a
bitvector of leaves and intersects per-feature interval-bitvectors
across all trees to short-circuit traversal. The only available
open-source implementation~\citep{quickscorer_rs} caps trees at
$\le 128$ leaves, an implementation choice rather than a limit of the
QuickScorer family. It also reads no categorical splits, sends every
missing value left, and compares thresholds in f32. On our datasets
that is a different function: the survival models split on categorical
features, and Expedia has missing values. The exception is Expedia's
LightGBM model at $T{=}50$, $L{=}2$, which has no categorical split and
sends no missing value right: on its five cells (sessions and cohorts
of $4$--$32$ rows) QuickScorer validates and is faster than TreeWalker
on both machines, by $1.3$--$2.5\times$ on Intel and $1.1$--$2.1\times$
on Arm.

We therefore compare on an encoded Expedia variant, with every missing
value replaced before training by a constant below its column's
training minimum, the same in train and test, so that QuickScorer's
left-going default never applies. The encoding does not make
QuickScorer compute LightGBM's function, because it still compares
thresholds in f32: $5$ of the $16$ cells ($T \in \{50, 500, 1000,
2000\}$, $L \in \{2, 4\}$, both frameworks) fail validation and are not
timed. Table~\ref{tab:quickscorer_h2h} reports the other $11$. QuickScorer
is faster than TreeWalker in all $11$ on Intel, by up to $1.75\times$ on
the smallest models, and in $5$ on Arm, where TreeWalker is faster in
the $L{=}4$ cells from $T{=}500$ on and at $T{=}2000$, $L{=}2$
($1.03$--$1.72\times$). QuickScorer's f64-to-f32 input
conversion is inside its timed call, as every method's conversion is.

\begin{table}[h]
\centering
\small
\caption{QuickScorer vs.\ TreeWalker on the encoded Expedia variant, in
the $11$ cells whose QuickScorer outputs validate. Latencies are Intel's,
in $\mu$s per session. QS/TW is QuickScorer's latency divided by
TreeWalker's; values below $1$ mean QuickScorer is faster. The two
$0.98$ ratios on Intel are within $\pm 2\%$.}
\label{tab:quickscorer_h2h}
\begin{tabular*}{\textwidth}{@{\extracolsep{\fill}}lrrrrrr@{}}
\toprule
Framework & $T$ & $L$ & TreeWalker ($\mu$s) & QuickScorer ($\mu$s) & QS/TW (Intel) & QS/TW (Arm) \\
\midrule
LightGBM & 50   & 2 & 5.38   & 3.44   & $0.64$ & $0.75$ \\
LightGBM & 500  & 4 & 67.13  & 61.22  & $0.91$ & $1.43$ \\
LightGBM & 1000 & 4 & 107.35 & 105.35 & $0.98$ & $1.61$ \\
XGBoost  & 50   & 2 & 6.85   & 3.90   & $0.57$ & $0.69$ \\
XGBoost  & 50   & 4 & 16.95  & 10.35  & $0.61$ & $0.88$ \\
XGBoost  & 500  & 2 & 34.85  & 24.09  & $0.69$ & $0.77$ \\
XGBoost  & 500  & 4 & 72.67  & 65.01  & $0.89$ & $1.45$ \\
XGBoost  & 1000 & 2 & 55.77  & 45.29  & $0.81$ & $0.89$ \\
XGBoost  & 1000 & 4 & 116.57 & 111.23 & $0.95$ & $1.60$ \\
XGBoost  & 2000 & 2 & 91.41  & 82.67  & $0.90$ & $1.03$ \\
XGBoost  & 2000 & 4 & 201.93 & 198.71 & $0.98$ & $1.72$ \\
\bottomrule
\end{tabular*}
\end{table}

The two methods optimize complementary axes (per-walk vs.\
number-of-walks) and could in principle compose: a partial-evaluation
outer loop with a QuickScorer-style inner kernel for the varying
subtrees. We have not implemented this composition.

\section{Supplementary Figures}\label{sec:app_sensitivity}

\begin{figure}[H]
\centering
\includegraphics[width=0.85\textwidth]{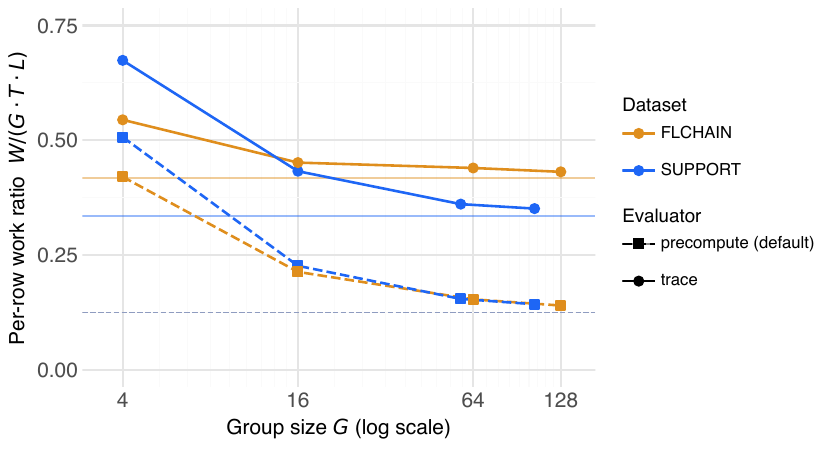}
\caption{Empirical convergence of the per-row work ratio
$W(\mathcal{F}, X) / (G \cdot T \cdot L)$ vs.\ group size $G$
(log scale) at the reference configuration ($T=500$, $L=8$, LightGBM)
on the fixed-$G$ datasets; the counts are the same on both machines. Solid lines: trace provisioning (no
precompute). Dashed lines: precompute provisioning (default).
Horizontal reference lines mark the limits under this baseline:
the Corollary~\ref{cor:trace} asymptote $(d_v+1)/L$ for trace
(per-dataset, in matching color) and the structural leaf-write floor
$1/L$ for precompute (dataset-independent, the lowest line at
$0.125$). Trace approaches its limit from above as row-dependent
predicate work scales with $G$; precompute drops faster because its
per-group provisioning cost is spread across more rows. Visualizes the data in Table~\ref{tab:decomp-results} (\S\ref{sec:decomp_validation}).}
\label{fig:asymptotic-convergence}
\end{figure}

\begin{figure}[H]
\centering
\includegraphics[width=\textwidth]{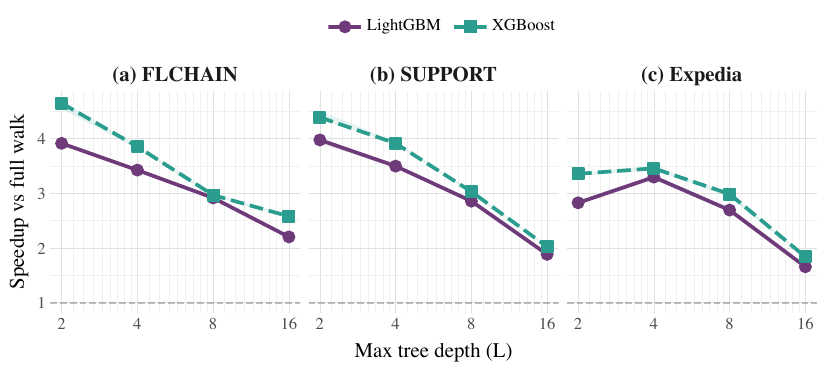}
\caption{Algorithmic speedup vs.\ tree depth $L$ ($T{=}500$, $G{=}16$, Intel).  On the survival datasets shallower trees yield higher speedup.  Ribbons: $95\%$ intervals.}\label{fig:depth_sensitivity}
\end{figure}

\begin{figure}[H]
\centering
\includegraphics[width=\textwidth]{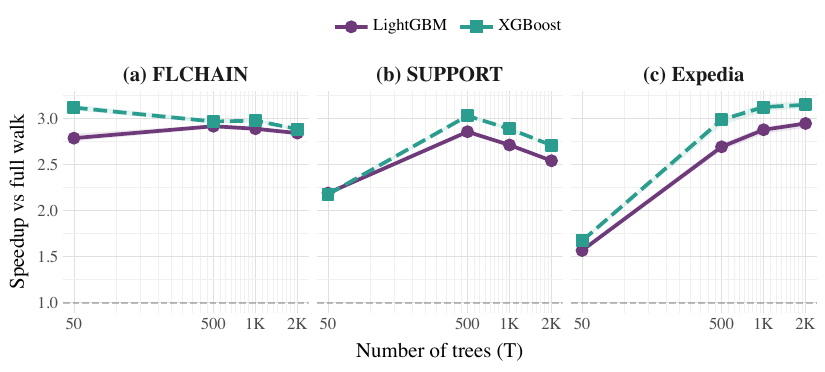}
\caption{Algorithmic speedup vs.\ ensemble size $T$ ($L{=}8$, $G{=}16$, Intel).  Ribbons: $95\%$ intervals.}\label{fig:ntrees_scaling}
\end{figure}

\begin{figure}[H]
\centering
\includegraphics[width=0.6\textwidth]{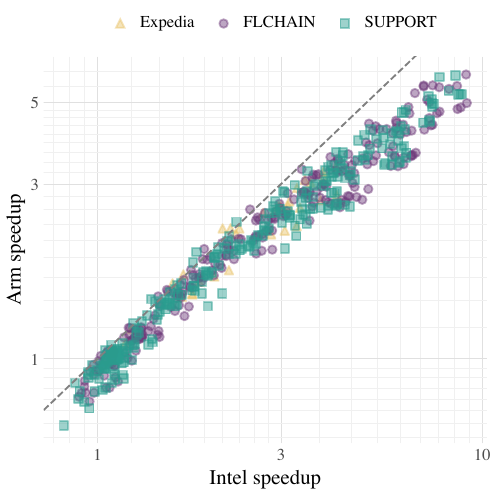}
\caption{Intel vs.\ Arm algorithmic speedup for the $544$ cells of the factorial grid (log-log).  Points above the diagonal indicate higher Arm speedup; dashed line = parity.}\label{fig:cross_platform}
\end{figure}

\begin{figure}[H]
\centering
\includegraphics[width=\textwidth]{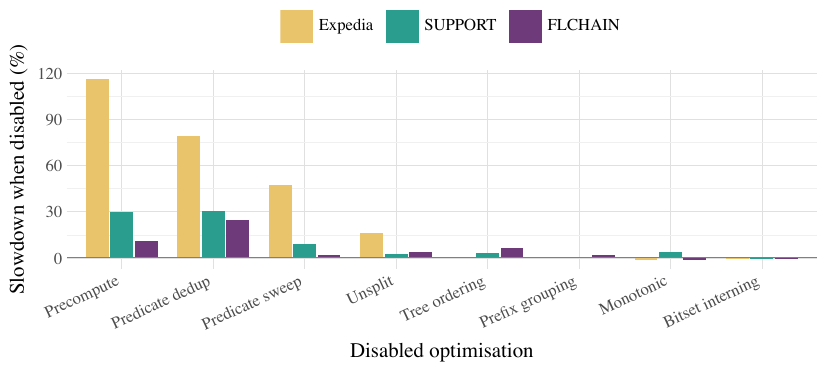}
\caption{Slowdown when each flag is disabled in isolation, in percent: the disabled latency over a research build of the kernel with every flag on, minus one (monotonic early-termination: against the build with precomputation off), at $T{=}500$, $L{=}8$ on Intel.  The research build is within $-4\%$ to $+6\%$ of production at the anchors.  $G$ is the reference $G{=}16$ for SUPPORT, empirical sessions for Expedia, and $G{=}4$ for FLCHAIN.  Flags are sorted by the geometric mean slowdown over the three anchors.  Bars: mean over the frameworks and both ablation runs, which agree within $0.9$ percentage points.}\label{fig:ablation_waterfall}
\end{figure}

\section{Scenario-Analysis Benchmark}\label{sec:app_scenario}

\paragraph{Workload.}
The dataset is UCI Default of Credit Card Clients~\citep{yeh2009credit}
(CC BY 4.0, OpenML 42477): $30{,}000$ rows, $23$ numeric features, binary
target. We train one LightGBM model ($T{=}500$, $L{=}8$,
\texttt{num\_leaves}$=255$, binary objective, seed $42$) on a
deterministic train split and draw $2{,}000$ base rows from the test
split. Each base row expands into $G$ variants; variant $0$ is the
unperturbed row. The perturbable pool is credit limit (\texttt{x1}),
bill amounts (\texttt{x12}--\texttt{x17}), and payment amounts
(\texttt{x18}--\texttt{x23}); the nested $k$ sets are ordered by
descending gain importance within the pool. Multipliers are
log-uniform in $[0.5, 2]$ (\texttt{np.random.default\_rng(42)}). All
non-selected features are bit-identical across the group.

\paragraph{Zero-valued features.}
A selected feature with base value $0$ stays $0$ under a multiplicative
shock, which is common for the payment amounts. At $k{=}8$,
$47.15\%$ of groups contain at least one selected zero-valued feature,
$15.00\%$ of selected entries are zero, and the mean realized varying
dimensionality is $6.80$ of $8$. We therefore measure $d_v$ on the
realized groups via the trace evaluator rather than deriving it from
$k$.

\paragraph{Protocol and correctness.}
Timing follows the protocol of \S\ref{sec:app_methodology} (at least
four rounds, $1\%$ precision target), in the same campaign and on the
same VMs as the main grid. TreeWalker and its full walk are validated
as there, against production predict within the model's rounding bound
(at least $10^{-13}$) and against the per-row reference. Models are
trained once on Intel and shared, so work counters are identical across
architectures. Latencies are in $\mu$s per scenario set (one base row
and its $G$ variants), not per variant;
Table~\ref{tab:scenario_full} reports unit-free speedups and ratios.

\begin{table}[h]
\centering
\small
\caption{Full scenario-analysis sweep. Speedup is algorithmic speedup over the same-layout full walk. Work ratios are $W/(G \cdot T \cdot L)$ under the trace and default (precompute) evaluators; the trace prediction under this baseline is $(d_v+1)/L$ with $L = 8$.}\label{tab:scenario_full}
\begin{tabular*}{\textwidth}{@{\extracolsep{\fill}}rrrrrrr@{}}
\toprule
$k$ & $G$ & Speedup (Arm) & Speedup (Intel) & Trace ratio & Predicted & Precompute ratio \\
\midrule
1 & 4   & $1.77\times$ & $1.96\times$ & 0.413 & 0.188 & 0.351 \\
1 & 16  & $4.46\times$ & $5.16\times$ & 0.245 & 0.188 & 0.183 \\
1 & 64  & $7.74\times$ & $11.79\times$ & 0.202 & 0.188 & 0.140 \\
1 & 128 & $9.21\times$ & $15.15\times$ & 0.195 & 0.188 & 0.133 \\
\midrule
2 & 4   & $1.47\times$ & $1.60\times$ & 0.502 & 0.268 & 0.359 \\
2 & 16  & $3.72\times$ & $4.16\times$ & 0.329 & 0.268 & 0.186 \\
2 & 64  & $6.90\times$ & $10.00\times$ & 0.284 & 0.268 & 0.141 \\
2 & 128 & $8.38\times$ & $12.82\times$ & 0.276 & 0.268 & 0.133 \\
\midrule
4 & 4   & $1.21\times$ & $1.29\times$ & 0.635 & 0.386 & 0.375 \\
4 & 16  & $2.93\times$ & $3.21\times$ & 0.453 & 0.386 & 0.193 \\
4 & 64  & $5.97\times$ & $8.01\times$ & 0.403 & 0.386 & 0.143 \\
4 & 128 & $7.37\times$ & $10.32\times$ & 0.395 & 0.386 & 0.135 \\
\midrule
8 & 4   & $1.04\times$ & $1.08\times$ & 0.875 & 0.601 & 0.401 \\
8 & 16  & $2.40\times$ & $2.57\times$ & 0.679 & 0.601 & 0.205 \\
8 & 64  & $4.97\times$ & $6.17\times$ & 0.622 & 0.601 & 0.148 \\
8 & 128 & $6.22\times$ & $8.12\times$ & 0.611 & 0.601 & 0.138 \\
\bottomrule
\end{tabular*}
\end{table}

\section{Groups Wider than 128 Rows}\label{sec:app_chunked}

\paragraph{Per-chunk decomposition.}
A group $X$ of width $G > 128$ splits into $K = \lceil G/128 \rceil$
chunks $X_1, \ldots, X_K$. Applying Theorem~\ref{thm:decomp} per chunk
bounds per-tree work by
\[
  W(\tau, X) \;\le\; \sum_{j=1}^{K} |T_c(\tau, X_j)| \;+\; G \;+\;
  \sum_{j=1}^{K} Q(\tau, X_j).
\]
The constant projection depends only on the tree and the constant
feature values, which all chunks share, so $|T_c(\tau, X_j)| =
|T_c(\tau, X)|$ for every $j$. The structural term is paid $K$ times
instead of once, and the per-row structural share plateaus at the
chunk width instead of falling further in $G$. Leaf writes remain $G$
in total, and $Q_{\mathrm{trace}}$ is additive over rows, so chunking
leaves both unchanged. For $Q_{\mathrm{pre}}$ the sort term does not
grow ($\sum_j G_j \log G_j \le G \log G$), while the predicate-sweep
term is paid once per chunk. This $K$-fold repayment of $|T_c|$ is the
analytical cost of chunking. Evaluating the group whole avoids it, at
the price of masks of $\lceil G/64 \rceil$ words, whose operations cost
$\Theta(G/64)$ (\S\ref{sec:theorem}); the engine does so for groups of
any width, in pieces of at most $1{,}024$ rows.

\paragraph{Chunks against whole groups.}
Because the engine takes wide groups whole, the cost of chunking is
measurable. On FLCHAIN's horizons of $256$, $512$ and $1{,}024$
($1{,}008$ rows) we time TreeWalker on whole groups and, in the same
cells and rounds, on $128$-row chunks of the same groups
($T \in \{500, 2000\}$,
$L \in \{8, 16\}$, both frameworks; $200$ of the $1{,}304$ patients at
$1{,}008$ rows). Chunks are slower in every cell, by $1.07$--$1.38\times$
on Intel and $1.07$--$1.31\times$ on Arm (Table~\ref{tab:chunked}). The
measurement is a net difference: chunks repay $|T_c|$ and the predicate
sweep once per chunk, while whole groups pay for wider masks. It does
not isolate the $K$-fold $|T_c|$ term.

\begin{table}[h]
\centering
\small
\caption{$128$-row chunks against whole groups on FLCHAIN: chunked
latency over whole-group latency, median [min, max] over the $8$ cells
($T \in \{500, 2000\}$, $L \in \{8, 16\}$, LightGBM and XGBoost) at
each horizon.}\label{tab:chunked}
\begin{tabular*}{\textwidth}{@{\extracolsep{\fill}}rrr@{}}
\toprule
Rows per group & Intel & Arm \\
\midrule
$256$      & $1.17\times$ [$1.13$, $1.21$] & $1.19\times$ [$1.14$, $1.26$] \\
$512$      & $1.22\times$ [$1.15$, $1.28$] & $1.20\times$ [$1.13$, $1.25$] \\
$1{,}008$  & $1.19\times$ [$1.07$, $1.38$] & $1.12\times$ [$1.07$, $1.31$] \\
\bottomrule
\end{tabular*}
\end{table}

\end{document}